\newtheorem{theorem}{Theorem}
\newcommand{\rdim}[1]{{\mathbb{R}}^{#1}}
\newcommand{\set}[3]{\{#1\}_{#2}^{#3}}
\newcommand{\seto}[1]{\{#1\}}
\newcommand{\mdet}[1]{\left\lvert#1\right\rvert}
\newcommand{\diag}[1]{{\rm diag}(#1)}
\newcommand{\Y}[1]{{\cal Y}_{#1}}
\DeclareMathOperator{\Sim}{{\mathbf{S}}}
\DeclareMathOperator{\Lmat}{{\mathbf{L}}}
\DeclareMathOperator{\Amat}{{\mathbf{A}}}
\DeclareMathOperator{\Rmat}{{\mathbf{R}}}
\DeclareMathOperator{\E}{{\mathbf{E}}}
\DeclareMathOperator{\expm}{{{\mathbf{expm}}}}
\DeclareMathOperator{\logm}{{{\mathbf{logm}}}}
\DeclareMathOperator{\tr}{{\rm{tr}}}
\DeclareMathOperator{\M}{{\mathbf{M}}}
\newcommand{\first}[1]{{\bf{{#1}}}}
\newcommand{\second}[1]{{\underline{#1}}}
\newcommand{\fourth}[1]{{#1}}
\newcounter{definition}
\newcommand{\definition}[1]{\refstepcounter{definition}{\noindent  \bf Definition \arabic{definition} (#1).~}}
\title{MS-DPPs: Multi-Source Determinantal Point Processes for Contextual Diversity Refinement of Composite Attributes in Text to Image Retrieval
}
\author{
Naoya Sogi$^1$
\and
Takashi Shibata$^1$\and
Makoto Terao$^{1}$\and
Masanori Suganuma$^2$\and
Takayuki Okatani$^{2,3}$
\\
\affiliations
$^1$Visual Intelligence Research Laboratories, NEC Corporation\\
$^2$Graduate School of Information Sciences, Tohoku University\\
$^3$RIKEN Center for AIP\\
\emails
naoya-sogi@nec.com, t.shibata@ieee.org, m-terao@nec.com,
\\
\{suganuma, okatani\}@vision.is.tohoku.ac.jp
}
\begin{document}
    \maketitle  

    \begin{abstract}
        Result diversification (RD) is a crucial technique in Text-to-Image Retrieval for enhancing the efficiency of a practical application.
        Conventional methods focus solely on increasing the diversity metric of image appearances.
        However, the diversity metric and its desired value vary depending on the application, which limits the applications of RD.
        This paper proposes a novel task called CDR-CA (Contextual Diversity Refinement of Composite Attributes).
        CDR-CA aims to refine the diversities of multiple attributes, according to the application's context.
        To address this task, we propose Multi-Source DPPs, a simple yet strong baseline that extends the Determinantal Point Process (DPP) to multi-sources.
        We model MS-DPP as a single DPP model with a unified similarity matrix based on a manifold representation.
        We also introduce Tangent Normalization to reflect contexts.
        Extensive experiments demonstrate the effectiveness of the proposed method.
        Our code is publicly available at {\url{https://github.com/NEC-N-SOGI/msdpp}}.
    \end{abstract}

    \section{Introduction}
    Text-to-Image Retrieval (T2IR) aims to retrieve images related to input text from a set of images~\cite{DeVise,VSEImprovingVisualSemantic,StackedCrossAttentiona,ImagetextRetrievalSurvey,wei2025uniir}.
    Recent advances in visual language models have become a new paradigm in the field of computer vision tasks, including T2IR~\cite{chen2020uniter,clip,albef,blip,beit,X22VLMAllinOnePreTrained}.
    For example, BLIP-2 and its variants can perform searches with significantly higher accuracy than conventional methods~\cite{BLIP2BootstrappingLanguageImagea,sogi2025object}.

    Result diversification (RD) is a fundamental task that aims to output relevant but diverse items so that users can comprehensively grasp the information within a gallery dataset~\cite{ImprovingWebSearch,ResultDiversificationSocial,BenchmarkingImageRetrieval}.
    RD methods are demanded for various applications~\cite{pmlr-v32-affandi14,BenchmarkingImageRetrieval,ResultDiversificationSearch} to enhance the efficiency of a practical T2IR application.
    The determinantal point process (DPP) is one of the standard approaches that retrieves the diverse samples based on the relevance score (i.e., similarity between the text query and images) and the appearance similarity between images~\cite{DeterminantalPointProcesses}.
    Conventional RD methods focus solely on increasing the diversity of image appearances.

    Images typically contain various composite attributes such as shooting time and region, and the target attributes for RD should differ for each application.
    Furthermore, whether the diversity of each attribute is increased or decreased depends on the application.
    For example, if the goal is to retrieve images related to disaster damage and grasp the overall damages from the retrieved images~\cite{I1M,sogi2024disaster}, it is necessary to ensure high diversity in terms of both time and area.
    Conversely, if the goal is to analyze disaster damages in a local area deeply, it is effective to search for images taken in a concentrated location with diverse appearances.
    There is a strong demand for a versatile diversity-aware framework that can incorporate application-adaptive contexts in a plug-and-play manner while inheriting the strong capability of visual language models.

    \begin{figure*}[tb]
        \centerline{\includegraphics[width=0.87\textwidth]{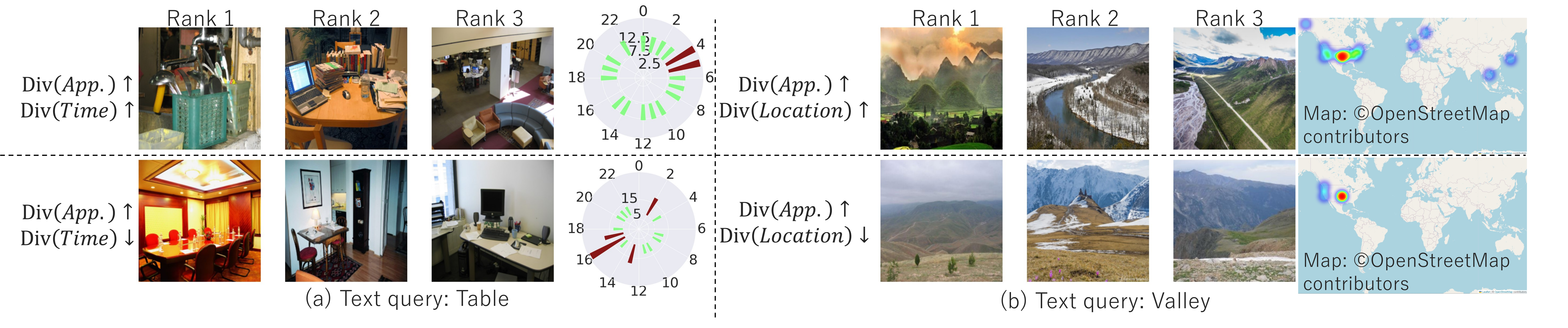}}
        \caption{Examples by MS-DPP for four CDR-CA tasks. 
        The left figures show shooting time refinement results, including the top three images and polar histograms of shooting times in the top 20 images.
        The upward bar of each histogram is at 0:00, and time moves clockwise, rounding the clock in 24 hours.
        The right figures show location refinement results, including heatmaps of the locations in the top 20 images.}
        \label{fig:task_result}
    \end{figure*}

    This paper proposes a generalized task called CDR-CA (Contextual Diversity Refinement of Composite Attributes), which aims to simultaneously refine the diversities of multiple attributes, such as image appearance and shooting time.
    To solve this task, we propose MS-DPPs (Multi-Source DPPs), a simple yet strong baseline that extends DPP to multiple sources.
    The key to MS-DPPs is that they can consider multiple attributes of each image in an application-adaptive and plug-and-play manner.
    We present that MS-DPP can be regarded as a DPP model with a unified similarity matrix, which is an interpolation of attributes' similarity matrices on the tangent space of the Symmetric Positive Definite (SPD) manifold.
    This formulation enables us to employ established optimization techniques from the standard DPPs.
    Consequently, MS-DPPs can refine the diversity of multiple attributes by reflecting each context while inheriting the strengths of existing retrieval frameworks, as shown in Fig.~\ref{fig:task_result}.
    For instance, in Fig.~\ref{fig:task_result}, CDR-CA requires us to provide a list of images with diverse appearance and shooting time.

    In addition to retrieval accuracy and diversity measures, it is important to faithfully reflect attribute weights (or user preferences) in a user-interactive system.
    For instance, if a user increases the importance of an attribute, the system should further improve its diversity.
    To address this issue, we introduce Tangent Normalization (TN) to MS-DPPs. 
    TN is a normalization process before computing an interpolation of attributes' similarity matrices.
    This process suppresses the effect of norms of the similarity matrices in the tangent space so that attribute weights can be effectively reflected.

    Extensive experiments demonstrate that our method outperforms the state-of-the-art methods in terms of mean of retrieval accuracy and diversity.

    The contributions of this paper are fourfold: 1) We propose a novel problem setting called the CDR-CA task, which aims to refine the diversity of multiple attributes simultaneously. 2) We propose MS-DPPs, a simple yet strong baseline that extends the DPP manner to multiple sources. 3) We propose Tangent Normalization (TN) to reflect application contexts in MS-DPPs. 4) We confirmed the effectiveness of the proposed method using three publicly available datasets.

    \section{Related Works}

    \subsection{Text to Image Retrieval}
    Text-to-image retrieval (T2IR) is a pivotal task in the field of vision and language research~\cite{DeVise,VSEImprovingVisualSemantic,StackedCrossAttentiona,ComparativeAnalysisCrossmodal,ImagetextRetrievalSurvey}. 
    T2IR aims to retrieve images from a vast database so that the retrieved images are relevant to a given text query. 
    This task is typically achieved by predicting a relevance score for a pair of each image and the query text.
    Two primary approaches dominate relevance prediction: Image-Text Contrastive (ITC)~\cite{clip,yu2022coca,chen2023internvl} and Image-Text Matching (ITM) methods~\cite{VSEImprovingVisualSemantic,StackedCrossAttentiona}.

    ITC determines relevance using a similarity measure between a query feature and an image feature, commonly using cosine similarity.
    This approach is recognized as fast and effective for T2IR. In contrast, ITM treats relevance prediction as a classification problem, assessing whether an image and a query form a matching pair. While ITM tends to be more accurate than ITC, ITM is also more computationally intensive.
    A hybrid approach is often employed~\cite{BLIP2BootstrappingLanguageImagea} to leverage the advantages of the methods: ITC first retrieves the top-$K$ images, which are then re-ranked by ITM.

    Our MS-DPP refines the diversity in retrieval results by re-ranking the original results obtained by a T2IR method.
    As the re-ranking is a post-processing step, our method can be combined with any T2IR method.
    
    \subsection{Result Diversification}
    Result Diversification (RD) is a fundamental task in information retrieval~\cite{mmr,ImprovingWebSearch,QueryResultDiversification,ResultDiversificationSocial,FastGreedyMAP,BenchmarkingImageRetrieval,ResultDiversificationSearch}.
    This task aims to output relevant but diverse items so that users can comprehensively understand the information in a dataset.

    \subsubsection{Result Diversification in Image Retrieval}
    In the context of image retrieval, RD is a task that increases the diversity of image appearance in the top-$K$ images while maintaining highly relevant images to a text query~\cite{VisualDiversificationImage,HierarchicalClusteringPseudorelevance,ResultDiversificationSocial,HypergraphBasedRerankingModel,PseudorelevanceFeedbackDiversification,FCAbasedKnowledgeRepresentation,BenchmarkingImageRetrieval,OverviewClusterbasedImage,KeywordBasedDiverseImage}.
    This task is important to provide a comprehensive view of the relevant images in a gallery database.
    There are three primary approaches to RD: 1) Maximal Marginal Relevance (MMR)-based methods~\cite{mmr}, 2) Clustering-based methods~\cite{PseudorelevanceFeedbackDiversification}, and 3) Determinantal Point Process (DPP)-based methods~\cite{1360574095929215232,KDPPsFixedsizeDeterminantal}.
    These methods are post-processing techniques, i.e., they re-rank images using relevance scores and image features.
    
    MMR-based methods sequentially select an image that maximizes the sum of relevance to a query and a diversity measure between a candidate and previously selected images. This method is widely used in RD, as it is simple and effective.
    Clustering-based methods sequentially select an image from each cluster while referring to relevances. This approach significantly increases diversity, although it decreases retrieval accuracy. To improve the trade-off, some extensions are proposed, e.g., using pseudo-labeling.
    DPP-based methods use a probabilistic model on subsets of all images. A probabilistic model is defined by a matrix determinant and assigns a high probability to a relevant and diverse subset. 
    As DPPs are effective, they are widely used in RD and actively studied, such as to improve scalability~\cite{FastGreedyMAP} and further understanding of their mechanism~\cite{DeterminantalPointProcesses,pmlr-v89-mariet19b}.
   
    We propose a novel task, Contextual Diversity Refinement of Composite Attributes (CDR-CA), to open up novel applications of RD in T2IR.
    CDR-CA is an extension of RD that refines the diversity of multiple attributes and has mixed diversity directions, i.e., increasing some attributes and decreasing others.
    We then propose Multi-Source DPP (MS-DPP) tailored method of DPPs to CDR-CA.
   
    \section{Preliminary}
    \subsection{Result Diversification}
    Let us begin with an introduction to the standard result diversification
    task in image retrieval. Let $\set{{\mathbf{f}}_{i}\in \rdim{d_I}}{i=1}{N_I}$
    be image features of gallery images $\set{I_{i}}{i=1}{N_I}$ and $q$ be a query
    text. 
    Text-to-image retrieval methods output a top-$K$ ranked list $\Y{K}$ of images by calculating a relevance score $r_{i}$ between the query text and each image $I_{i}$. The result diversification task is to increase the diversity of image appearance in the top-$K$ ranked list, while maintaining high-relevance images.
    This concept can be written as follows:
    \begin{align}
        {\rm argmax}_{\Y{K}}(1-\theta)\sum_{i\in\Y{K}}r_{i}+\theta{\rm Div}(\Y{K}),\label{eq:task}
    \end{align}
    where $\theta$ is a trade-off parameter between retrieval accuracy and diversity, and ${\rm Div}(\Y{K})$ is a diversity measurement, such as the negative maximum pairwise similarity of images in $\Y{K}$~\cite{ResultDiversificationSearch}.

    \subsection{Result Diversification by DPPs}
    In the following, we describe a DPP-based method~\cite{KDPPsFixedsizeDeterminantal}, which is the basis of our method.
    \subsubsection{Notations}
    We introduce notations for DPPs.
    Let $N_I$ be the number of images in a gallery, $\set{{\mathbf{f}}_{i}\in \rdim{d_I}}{i=1}{N_I}$ be image features.
    A retrieval model outputs relevance scores $\set{r_{i}}{i=1}{N_I}$ between a query text and images.
    A retrieval result, i.e., top-$K$ ranked list, is represented by an ordered index set $\Y{K}$.

    DPPs use three matrices: $\Rmat$, $\Sim$, and $\Lmat$.
    $\Rmat\in\mathbb{R}^{N_I\times N_I}=\diag{r_1,r_2,...,r_{N_I}}$.
    $\Sim\in\mathbb{R}^{N_I\times N_I}$ is a similarity matrix of image features.
    $\Lmat\in\mathbb{R}^{N_I\times N_I}=\Rmat\Sim\Rmat$ defines a DPP.
    Let $\E$ be an identity matrix.
    A matrix with subscript $\Y{g}$ is a submatrix indexed by $\Y{g}$.
    For instance, $\Sim_{\Y{g}}\in\mathbb{R}^{g\times g}$ is a submatrix of $\Sim$ indexed by $\Y{g}$: $(\Sim_{\Y{g}})_{(i,j)}=(\Sim)_{x,y}, x=\Y{g}(i), y=\Y{g}(j)$, where $\Y{g}(i)$ is the $i$th element of $\Y{g}$.

    Let $\log$ be the natural logarithm, $\logm$ and $\expm$ be the matrix logarithm and matrix exponential, and $\mdet{\cdot}$ and $\tr{\cdot}$ be the matrix determinant and trace, respectively.

    \subsubsection{Determinantal Point Process (DPP)}
    DPP increases the diversity of image appearance in the top-$K$ ranked list, while maintaining highly relevant images by using the following probabilistic model:
    \begin{align}
        p(\Y{g})=\frac{\mdet{\Lmat_{\Y{g}}}}{\mdet{\Lmat+\E}}
        \propto\mdet{\Lmat_{\Y{g}}}={\mdet{\Rmat_{\Y{g}}}\mdet{\Sim_{\Y{g}}}\mdet{\Rmat_{\Y{g}}}}.\label{eq:dpps}
    \end{align}
    Large determinants of $\Rmat_{\Y{g}}$ and $\Sim_{\Y{g}}$, i.e., large probability, indicate that the images in $\Y{g}$ are relevant and diverse, respectively.
    Please refer to the paper~\cite{DeterminantalPointProcesses} for the detailed mechanism.
    We can obtain a diversified image set as $\Y{g}^*={\rm argmax}_{\Y{g}} \mdet{\Rmat_{\Y{g}}\Sim_{\Y{g}}\Rmat_{\Y{g}}}$.
    This optimization problem is generally NP-hard, but we can obtain $\Y{g}^*$ efficiently by sequentially selecting images that maximize the determinant~\cite{KDPPsFixedsizeDeterminantal,FastGreedyMAP}.

    \begin{figure*}[t]
        \centerline{\includegraphics[width=0.85\textwidth]{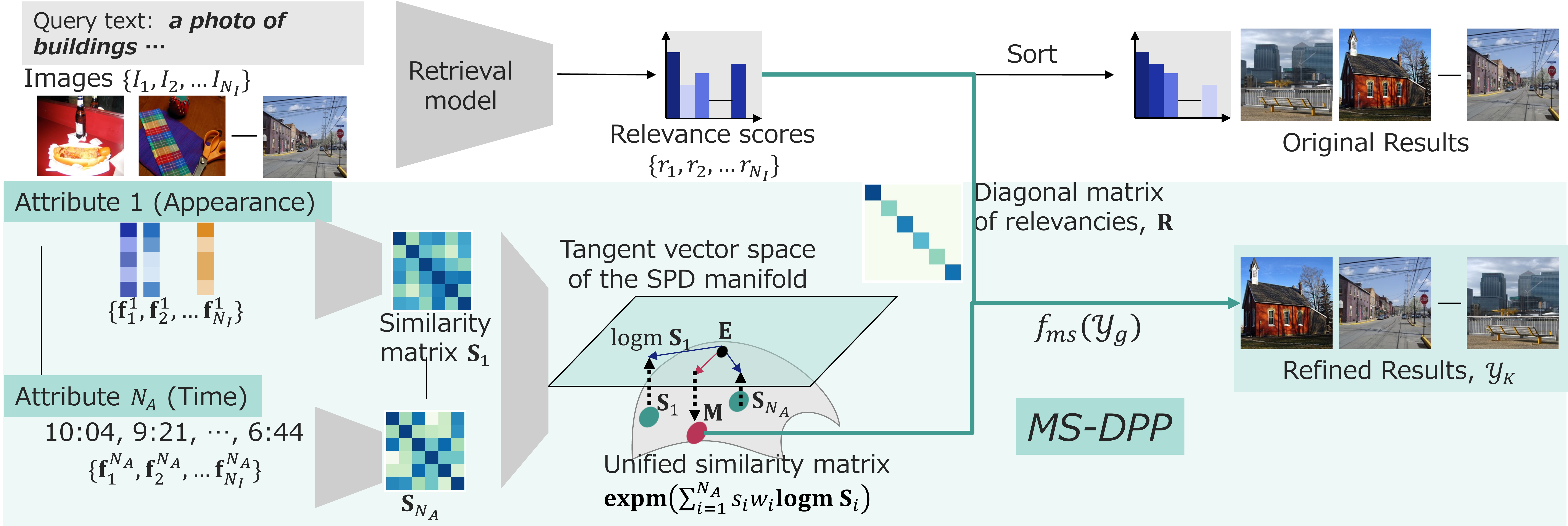}}
        \caption{Overview of MS-DPP. We first calculate the similarity matrices of each attribute and then unify them into a single similarity matrix through operations, matrix logarithm and exponential, on the SPD manifold. Given the unified similarity matrix and the diagonal matrix of relevances, MS-DPP re-ranks images to refine the diversity of multiple attributes.}
        \label{fig:overview}
    \end{figure*}

    \section{Problem Formulation: Contextual Diversity Refinement of Composite Attributes}

    Our Contextual Diversity Refinement of Composite Attributes (CDR-CA) task is to refine the diversity of multiple attributes, such as image appearance and shooting time, while reflecting relevance scores, as shown in Fig.~\ref{fig:task_result}.
    For example, CDR-CA requires us to provide a list of images that 1) have diverse appearances and shooting times or 2) have diverse appearances while taken at a concentrated time.
    The concept of CDR-CA can be written as follows:
    \begin{align}
        {\rm argmax}_{\Y{K}}(1-\theta)\sum_{i\in\Y{K}}r_{i}+\theta\sum_{j=1}^{N_A}s_{j}w_{j}{\rm Div}_{j}(\Y{K}),\label{eq:task_a2da}
    \end{align}
    where $N_{A}$ is the number of attributes, $s_{j}\in\seto{-1, +1}$ is an indicator parameter to determine diversity of $j$th attribute is increased (+1) or decreased (-1), $w_{j}$ is the weight of $j$th attribute, and ${\rm Div}_{j}(\Y{K})$ is the diversity measurement of $j$th attribute of images in $\Y{K}$. 
    For simplicity, we add image appearance as the first attribute.
    Table~\ref{tab:diff} summarizes the differences between CDR-CA and the standard result diversification.

    \begin{table}
        \centering
        \scalebox{0.75}{
        \begin{tabular}{l|ccc}
            \toprule
            & $N_A$ & Diversity Direction $s$ & Preference $w$\\
            \midrule
            RD & 1 & $s=+1$ & $w=1$ \\
            CDR-CA & $>1$ & $s_i\in\{-1, +1\}$ & $w_i\in[0,1]$ \\
            \bottomrule
        \end{tabular}
        }
        \caption{Differences between the Result Diversification (RD) and the proposed CDR-CA tasks.}
        \label{tab:diff}
    \end{table}

    
    \section{Proposed Method}
    This section presents the proposed method, Multi-Source Determinantal Point Process (MS-DPP), a novel extension of the traditional Determinantal Point Processes (DPPs) tailored to address CDR-CA.
    Figure~\ref{fig:overview} illustrates the overview of the proposed method.
    We first describe the basic formulation of MS-DPP using multiple DPPs to handle multiple attributes.
    We then demonstrate that the basic formulation can be seamlessly transformed into a single DPP model by unifying similarity matrices of multiple attributes through operations on the SPD manifold.
    This unification enables us to leverage established optimization techniques for DPPs.
    Finally, we introduce Tangent Normalization (TN), a novel technique that aligns attribute diversities with user-defined weights.

    \subsection{Multi-Source Determinantal Point Processes}
    \subsubsection{Basic Idea}
    MS-DPP is an extension of the DPP framework to address a CDR-CA task.
    The key concept involves applying multiple DPPs across different attributes to enhance their diversity simultaneously.
    Let us first consider a simple CDR-CA task; all attributes hold equal importance ($w_i=1$), and the diversity of each attribute is increased ($s_i=1$).
    The basic model is defined as follows:
    \begin{align}
        \prod_{i=1}^{N_A}p_i(\Y{g})
        \propto f_{base}(\Y{g})
        =\prod_{i=1}^{N_A}\mdet{\Rmat_{\Y{g}}}\mdet{\Sim_{i,\Y{g}}}\mdet{\Rmat_{\Y{g}}},
        \label{eq:msdpp-base}
    \end{align}
    where $p_i$ is a DPP model for the $i$th attribute defined in \eqref{eq:dpps}, $\Sim_i$ and $\Rmat$ are a similarity matrix of the $i$th attribute and a diagonal matrix of relevances, respectively.
    Maximization of this composite model $f_{base}$ refines the diversity of multiple attributes within the top-$K$ ranked list.

    To efficiently derive the optimal list $\Y{K}^*$, we reformulate the composite DPP model $f_{base}$ into a single DPP model by unifying the similarity matrices $\seto{\Sim_i}$ into one SPD matrix $\M$.
    This unification allows us to employ optimization techniques~\cite{KDPPsFixedsizeDeterminantal,FastGreedyMAP} from the standard DPP framework.

    \subsubsection{Unified Representation for MS-DPP}
    In the following, we first reformulate the composite DPP model into a single DPP model with a unified similarity matrix.
    We then generalize the unified representation to accommodate various cases of CDR-CA tasks, i.e., mixed diversity directions ${s_i\in\{-1, +1\}}$ and flexible weights $w_i\in\mathbb{R}$.

    \begin{theorem}
        {\rm\textbf{Unified Representation:}} The composite DPP model ($f_{base}$) in \eqref{eq:msdpp-base} can be expressed as a single DPP model with a unified similarity matrix $\M$ as follows:
    \begin{align}
        f_{base}(\Y{g})
        \propto f_{unify}(\Y{g})
        =\mdet{\Rmat_{\Y{g}}}\mdet{\M}\mdet{\Rmat_{\Y{g}}},\label{eq:msddp1}
    \end{align}
    where $\M=\expm{\sum_{i=1}^{N_A}{\logm \Sim_{i,\Y{g}}}}$.
    \end{theorem}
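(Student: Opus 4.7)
The plan is to reduce the theorem to a single scalar identity about $\mdet{\M}$, since the $\mdet{\Rmat_{\Y{g}}}$ factors appear on both sides and can be handled by matching conventions. Concretely, it suffices to establish
\begin{align*}
\mdet{\M} \;=\; \prod_{i=1}^{N_A}\mdet{\Sim_{i,\Y{g}}},
\end{align*}
after which the comparison between $f_{base}$ and $f_{unify}$ becomes algebraic.

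First I would recall two standard identities on the SPD cone. Since each $\Sim_{i,\Y{g}}$ is assumed SPD, its eigendecomposition $U_i\Lambda_i U_i^\top$ makes $\logm \Sim_{i,\Y{g}} = U_i(\log\Lambda_i)U_i^\top$ well-defined and symmetric. The two identities I would invoke are (i) the Jacobi-type formula $\mdet{\expm(X)} = \exp(\tr(X))$ for every square $X$, proved by triangularizing $X$ so that eigenvalues of $\expm(X)$ are exponentials of those of $X$, and (ii) $\tr(\logm(A)) = \log\mdet{A}$ for SPD $A$, which follows directly from (i) by setting $X = \logm(A)$ and using $\expm(\logm(A)) = A$.

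Next, I would chain these identities with the linearity of the trace:
\begin{align*}
\mdet{\M}
&= \mdet{\expm\bigl(\textstyle\sum_{i=1}^{N_A}\logm \Sim_{i,\Y{g}}\bigr)} \\
&= \exp\bigl(\tr\bigl(\textstyle\sum_{i=1}^{N_A}\logm \Sim_{i,\Y{g}}\bigr)\bigr) \\
&= \exp\bigl(\textstyle\sum_{i=1}^{N_A}\log \mdet{\Sim_{i,\Y{g}}}\bigr)
= \prod_{i=1}^{N_A}\mdet{\Sim_{i,\Y{g}}}.
\end{align*}
Substituting this into $f_{unify}$ gives $f_{unify}(\Y{g}) = \mdet{\Rmat_{\Y{g}}}^2 \prod_i \mdet{\Sim_{i,\Y{g}}}$, while expanding the product in $f_{base}$ gives $f_{base}(\Y{g}) = \mdet{\Rmat_{\Y{g}}}^{2N_A}\prod_i \mdet{\Sim_{i,\Y{g}}}$.

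The main obstacle, and the step I would flag carefully, is the interpretation of $\propto$ relating these two expressions. The two quantities differ by the $\Y{g}$-dependent factor $\mdet{\Rmat_{\Y{g}}}^{2(N_A-1)}$, so strict scalar proportionality fails unless one reparametrizes the relevance block, for instance by replacing $\Rmat$ with $\Rmat^{1/N_A}$ inside each factor of \eqref{eq:msdpp-base} or by declaring the shared relevance to be counted only once across the $N_A$ DPP models. Under this convention, the two expressions agree up to the product of the $N_A$ individual normalizers $\mdet{\Lmat_i+\E}$, which is constant in $\Y{g}$, and the proof of Theorem~1 is complete. I would make this reparametrization explicit at the start so that the subsequent matrix-analytic computation is the only real content.
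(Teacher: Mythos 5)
Your proof is correct and follows essentially the same route as the paper's: both reduce the claim to the pair of identities $\mdet{\expm(X)}=e^{\tr(X)}$ and $\tr(\logm A)=\log\mdet{A}$, which give $\mdet{\M}=\prod_{i=1}^{N_A}\mdet{\Sim_{i,\Y{g}}}$. The one substantive point where you go beyond the paper is the caveat about $\propto$, and you are right to raise it: $f_{base}$ carries the prefactor $\mdet{\Rmat_{\Y{g}}}^{2N_A}$ while $f_{unify}$ carries only $\mdet{\Rmat_{\Y{g}}}^{2}$, and the ratio $\mdet{\Rmat_{\Y{g}}}^{2(N_A-1)}$ depends on $\Y{g}$, so the two objectives are not literally proportional as functions of $\Y{g}$ and need not share an argmax when $N_A>1$. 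The paper's own proof glosses over exactly this step (its line $f_{base}(\Y{g})=e^{N_A}\mdet{\Rmat_{\Y{g}}}\bigl(e^{\tr\sum_i\Amat_i}\bigr)\mdet{\Rmat_{\Y{g}}}$ does not follow from the preceding equation, whose correct prefactor is $\mdet{\Rmat_{\Y{g}}}^{2N_A}$, and the reduction to a single pair of $\mdet{\Rmat_{\Y{g}}}$ factors is hidden inside the $\propto$). Your proposed fix --- replacing $\Rmat$ by $\Rmat^{1/N_A}$ in each factor of the composite model, or equivalently counting the shared relevance term once --- makes the statement literally true; one can also note that in the final MS-DPP model the relevance exponent is reparametrized by $\alpha$ anyway, so the extra $\Y{g}$-dependent power of $\mdet{\Rmat_{\Y{g}}}$ is absorbed into the accuracy--diversity trade-off rather than being a genuine loss of generality.
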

    {\it Proof:} Please refer to Sec. 2 in the supplementary material.
    
    The essence of the unified representation lies in transforming the product of multiple matrix determinants into a single determinant by utilizing the matrix logarithm and exponential.
    Consequently, we can leverage existing optimization techniques from the standard DPP framework to efficiently derive the optimal result $\Y{K}^*$ of $f_{unify}$.
    
    \definition{MS-DPP}
    In the above, we consider a simple CDR-CA task where all attributes have equal importance $w_i=1$ and the diversity of each attribute is increased $s_i=1$.
    To tackle various types of CDR-CA tasks, we extend the unified representation as follows: 
    \begin{align}
        f_{ms}({\Y{g}})=\mdet{e^{ \alpha \Rmat_{\Y{g}}}}\mdet{\expm{\sum_{i=1}^{N_A}{s_iw_i\logm\Sim_{i,\Y{g}}}}}\mdet{e^{ \alpha \Rmat_{\Y{g}}}},\label{eq:msdppw} 
    \end{align}
    where $s_i\in\{-1, +1\}$ indicates the diversity refinement direction of the $i$th attribute, $w_i$ is the weight or user's preference of the $i$th attribute, and $e^{ \alpha \Rmat_{\Y{g}}}$ is elementwise exponential of $\alpha\Rmat_{\Y{g}}$.
    $\alpha=\frac{\theta}{2(1-\theta)}, \theta\in(0,1)$ is a trade-off parameter~\cite{FastGreedyMAP} between retrieval accuracy and diversity refinement.
    We refer to $f_{ms}$ as an {\bf MS-DPP} model.
    Algorithm~\ref{alg:msdpps} shows the k-DPP~\cite{KDPPsFixedsizeDeterminantal} based optimization of an MS-DPP model.
    In the experiments, we use the computationally efficient extension~\cite{FastGreedyMAP} of this algorithm.

    \begin{algorithm}[tb]
        \caption{\footnotesize MS-DPP for a CDR-CA task}
        \label{alg:msdpps}
        \footnotesize
        \begin{algorithmic}
            \STATE {\bfseries Input:} Similarity matrices $\set{\Sim_{i}\in\mathbb{R}^{N_I\times N_I}}{i=1}{N_A}$, Relevance matrix $\Rmat\in\mathbb{R}^{N_I\times N_I}$, attribute weights $\seto{w_i}$, refinement directions $\seto{s_i}$, and trade-off parameter $\theta$.
            \STATE {\bfseries Output:} A top-$K$ ranked list $\Y{K}$.
            \STATE Initialize $\Y{g}=\emptyset$, ${\cal I}=\{1,2,...,N_I\}$
            \WHILE{$|\Y{g}|<K$}
            \FOR{$j=1$ {\bfseries to} $N_A$}
                \STATE Calculate $f_j=f_{ms}(\Y{g\cup j})$ \eqref{eq:msdppw}.
            \ENDFOR
            \STATE Update $\Y{g}=\Y{g}\cup\seto{j^*}$, ${\cal I}={\cal I}\setminus\seto{j^*}$, where $j^*={\rm argmax}_{j}f_j$.
            \ENDWHILE
        \end{algorithmic}
    \end{algorithm}

        \begin{figure}[t]
            \centering
            \includegraphics[width=\columnwidth]{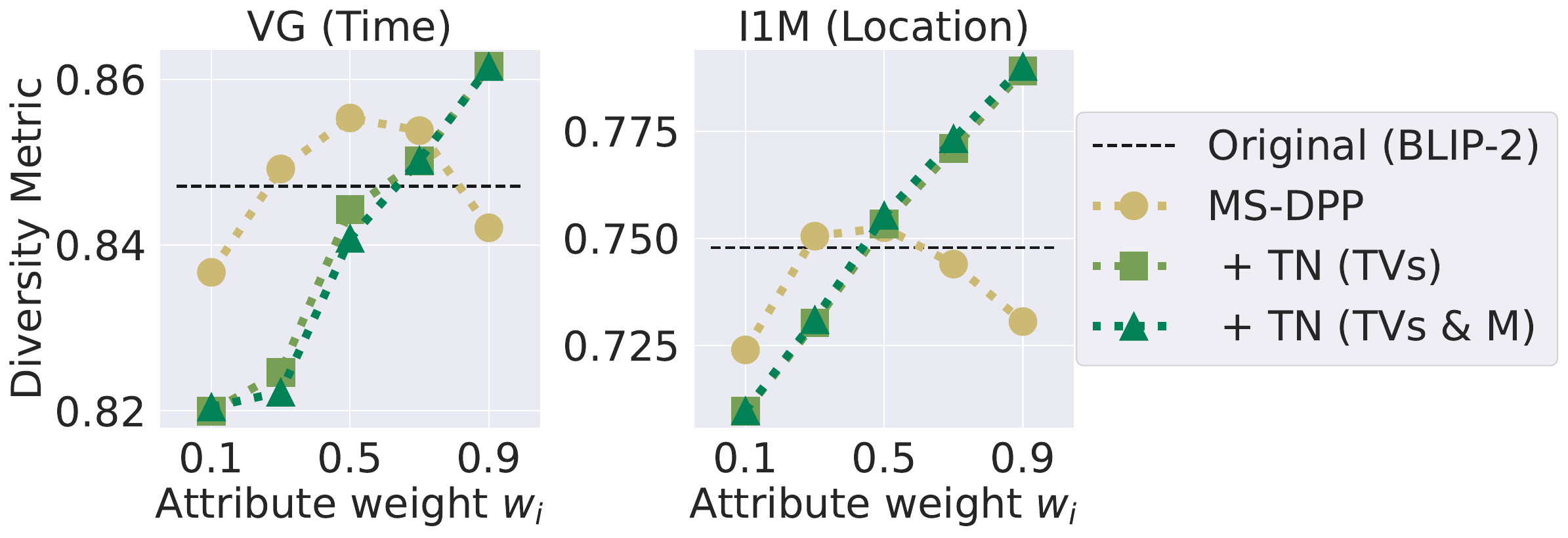}
        \caption{Transition of the diversity of an attribute varying its weight $w_i$ from 0.1 to 0.9 with a step size of 0.2 with $\theta=0.9$. The task is to increase diversity in appearance and an attribute (time or location). The higher diversity metric indicates better performance.
        } 
        \label{fig:ana_tradeoff} 
    \end{figure}

    \subsection{MS-DPP with Tangent Normalization}
    \label{sec:tn}
    \subsubsection{Motivation and Basic Idea} 
    Figure~\ref{fig:ana_tradeoff} shows the transition of the diversity of an attribute varying its weight $w_i$. 
    The details of this experiment will be described in Sec.~\ref{sec:exp}.
    MS-DPP improves the diversity metric, although MS-DPP with a large weight $w_i>0.5$ does not always increase the metric as much as the weight.
    It is important for a user-interactive system to reflect weights faithfully.
    To address this issue, we introduce a novel normalization, Tangent Normalization (TN).
   
    The reason for the above issue can be understood from the fact that the unification $(\sum_{i=1}^{N_A}{s_iw_i\logm\Sim_{i,\Y{g}}})$ of similarity matrices can be regarded as an interpolation of the tangent vectors $\seto{\logm \Sim_{i,\Y{g}}}$ (TVs) on the SPD manifold.
    This interpretation arises because the matrix logarithm $\logm$ maps an SPD matrix to the tangent space, an Euclidean space, with the identity matrix as the origin (please refer to Sec. 1 in the supplementary material).
    This interpretation suggests that the norm of each TV can influence the balance of diversity refinement among attributes, i.e., an attribute with a large norm can be dominant, although a user makes large importance on other attributes.

    \subsubsection{Tangent Normalization}
    To mitigate the influence of the norm of each TV, we normalize TVs before taking the weighted sum.
    Considering that $\Rmat$ is also an SPD matrix and we need to consider the trade-off between retrieval accuracy and diversity refinement, we propose the following normalization:
    \begin{align}
        \Amat'_{i,{\cal Y}_g}=\frac{b}{\|\logm\Sim_{i,\Y{g}}\|_F}\logm\Sim_{i,\Y{g}}, \label{eq:tns}
    \end{align}
    where $\|\cdot\|_F$ is the Frobenius norm, i.e., the norm in the tangent vector space, and $b=\|\logm \Rmat_{\Y{g}}\|_F$ is the norm of the tangent vector of the relevance matrix.
    We call this normalization as {\bf Tangent Normalization (TN)}.
    Figure~\ref{fig:tn_overview} shows the conceptual diagram of TN.

    Additionally, we normalize the mean of normalized vectors (${\M'}= \expm\sum_{i=1}^{N}{s_iw_i\Amat_{i,\Y{g}}'}$) to faithfully reflect the accuracy-diversity trade-off, as follows:
    \begin{align}
        \logm\M''=\frac{b}{\|\logm{\rm\bf \M'}\|_F} \logm{\rm\bf \M'}.\label{eq:tnr}
    \end{align}
    We use $\M'$ or $\M''$ in the MS-DPP model \eqref{eq:msdppw} instead of $\M$ to reflect the user's preferences $\seto{w_i}$ among attributes.
    As shown in Fig.~\ref{fig:ana_tradeoff}, MS-DPP with TN improves the diversity metric as the weight increases.
    This improvement will be quantitatively confirmed in the experiment section using the evaluation index defined in the following subsection.

    \subsection{Preference Reflection Score (PRS)}
    To assess how well a user's preference $w_i$ for an attribute is reflected, we introduce the Preference Reflection Score (PRS).
    Given refinement results $\set{\Y{g,j}}{j=1}{T}$ and the user's preference $\set{w_{i,j}}{j=1}{T}$, ($w_{i,1}\leq w_{i,2}\leq...\leq w_{i,T}$), for the $i$th attribute in the $j$th result, we calculate the PRS as follows:
    \begin{align}
        {\rm PRS} = \sum_{j=2}^{T}{\frac{(\overline{\rm {Div}}_{i}(\Y{g,j})-\overline{\rm {Div}}_{i}(\Y{g,j-1}))}{(w_{i,j}-w_{i,j-1})}},
        \label{eq:prs}
    \end{align}
    where ${\rm \overline{Div}}_{i}(\Y{g,j})$ is the diversity measurement of the $i$th attribute in the top-$g$ ranked list $\Y{g,j}$.
    ${\rm \overline{Div}}_i$ is normalized to $\max_{j}{\rm \overline{Div}}_{i}(\Y{g,j})=1, \min_{j}{\rm \overline{Div}}_{i}(\Y{g,j})=0$.
    PRS increases when the diversity of the $i$th attribute grows in alignment with the rising user's preference. 
    In the subsequent experiments, we generate eleven results by changing the user's preference from 0 to 1 in increments of 0.1 and then calculate the PRS.


    \begin{figure}[t]
        \centerline{\includegraphics[width=0.45\linewidth]{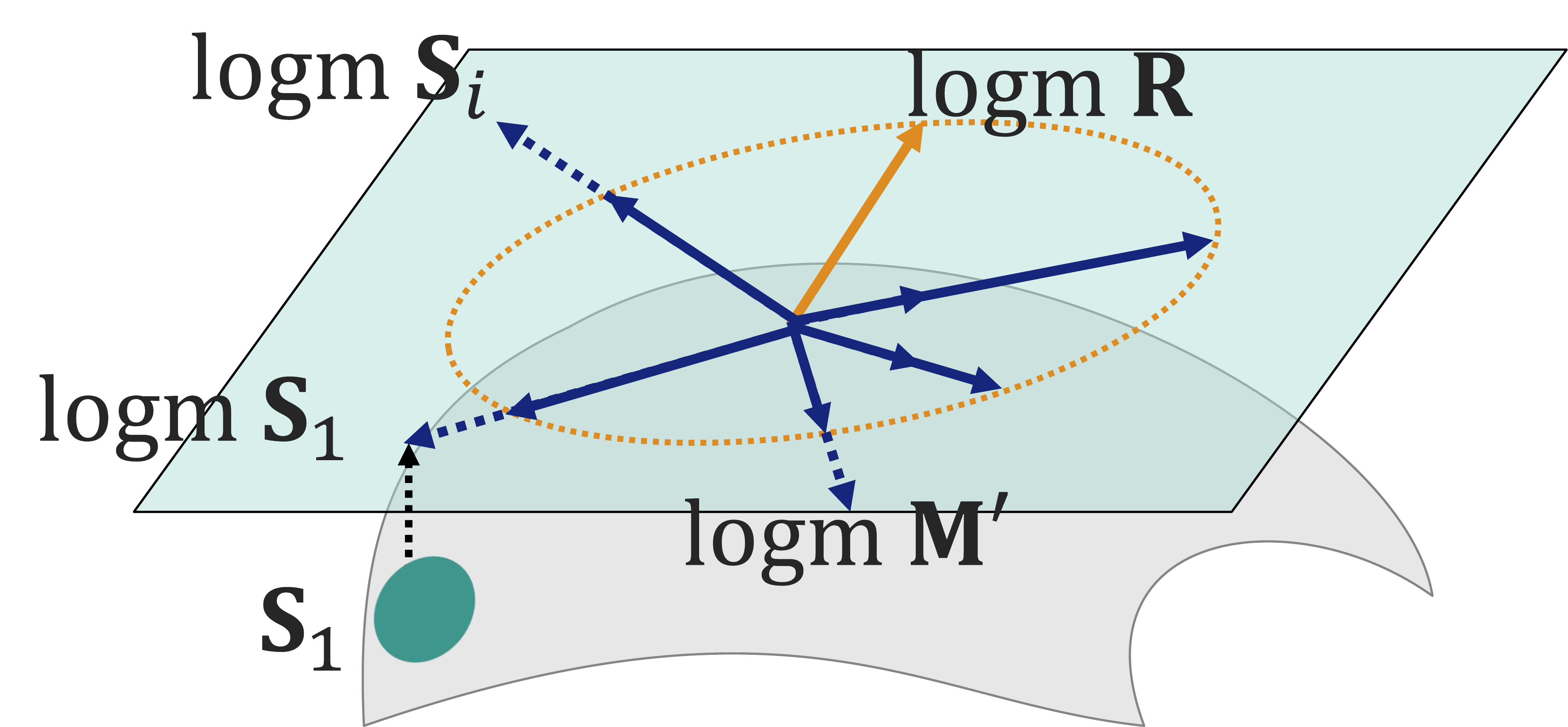}}
        \caption{Conceptual diagram of Tangent Normalization (TN). 
        To suppress the influence of the norm of each tangent vector, TN normalizes tangent vectors of similarity matrices $\seto{\Sim_i}$ and the unified similarity matrix $\M'$ by the norm of the tangent vector of $\Rmat$.}
        \label{fig:tn_overview}
    \end{figure}

    \section{Experiments}

\begin{table*}[t]
    \begin{center}
        \scalebox{0.75}{
        \begin{tabular}{l|ll|rrrr}
\toprule
            &&& {\bf VG (Time)} & {\bf I1M (Location)} & {\bf PP (Time)} & {\bf PP (Location)} \\
\midrule
            &App.&Att.& HM$\uparrow$ \footnotesize{(MAP$\uparrow$,DM$\uparrow$)} & HM$\uparrow$ \footnotesize{(MAP$\uparrow$,DM$\uparrow$)}& HM$\uparrow$\footnotesize{(N@10$\uparrow$,DM$\uparrow$)}& HM$\uparrow$ \footnotesize{(N@10$\uparrow$,DM$\uparrow$)}\\
\midrule
            BLIP-2&-&- & 0.8450 \footnotesize{(0.8259, 0.8651)} & 0.7948 \footnotesize{(0.7917, 0.7979)} & 0.7244 \footnotesize{(0.6214, 0.8683)} & 0.7181 \footnotesize{(0.6214, 0.8503)} \\
\midrule
            Clustering&\checkmark&- & 0.8497 \footnotesize{(0.8156, 0.8868)} & 0.7870 \footnotesize{(0.7710, 0.8038)} & 0.6970 \footnotesize{(0.5781, 0.8775)} & 0.6938 \footnotesize{(0.5781, 0.8675)}\\
            MMR&\checkmark&- & 0.8467 \footnotesize{(0.8222, 0.8726)} & 0.7922 \footnotesize{(0.7848, 0.7998)} & 0.7226 \footnotesize{(0.6176, 0.8704)} & 0.7167 \footnotesize{(0.6176, 0.8535)} \\
            k-DPP&\checkmark&- & 0.8472 \footnotesize{(0.8284, 0.8669)} & 0.7950 \footnotesize{(0.7901, 0.8000)} & 0.7233 \footnotesize{(0.6194, 0.8693)} & 0.7171 \footnotesize{(0.6194, 0.8516)} \\
\midrule
            Clustering&-&\checkmark & \second{0.8557} \footnotesize{(0.8071, 0.9104)} & \second{0.8056} \footnotesize{(0.7837, 0.8288)} & 0.7226 \footnotesize{(0.6086, 0.8891)} & 0.7140 \footnotesize{(0.5967, 0.8888)} \\
            MMR&-&\checkmark & 0.8475 \footnotesize{(0.8282, 0.8676)} & 0.7968 \footnotesize{(0.7879, 0.8058)} & 0.7207 \footnotesize{(0.6108, 0.8787)} & 0.7123 \footnotesize{(0.6025, 0.8710)} \\
            k-DPP&-&\checkmark & 0.8510 \footnotesize{(0.7930, 0.9183)} & 0.7870 \footnotesize{(0.7122, 0.8793)} & \fourth{0.7265} \footnotesize{(0.6205, 0.8763)} & \second{0.7199} \footnotesize{(0.6199, 0.8583)}  \\
\midrule
            Clustering&\checkmark&\checkmark & 0.8422 \footnotesize{(0.7984, 0.8910)} & \fourth{0.7981} \footnotesize{(0.7799, 0.8171)} & 0.7204 \footnotesize{(0.6060, 0.8878)} & 0.7096 \footnotesize{(0.5916, 0.8864)} \\
            MMR&\checkmark&\checkmark & 0.8464 \footnotesize{(0.8246, 0.8694)} & 0.7964 \footnotesize{(0.7887, 0.8044)} & 0.7151 \footnotesize{(0.5999, 0.8850)} & 0.7136 \footnotesize{(0.6051, 0.8695)} \\
            k-DPP&\checkmark&\checkmark & 0.8534 \footnotesize{(0.7979, 0.9171)} & 0.7758 \footnotesize{(0.7042, 0.8636)} & \second{0.7266} \footnotesize{(0.6208, 0.8759)} & \fourth{0.7197} \footnotesize{(0.6200, 0.8575)} \\
\midrule
            MS-DPP &\checkmark&\checkmark & \first{0.8649} \footnotesize{(0.8220, 0.9124)} & \first{0.8081} \footnotesize{(0.7618, 0.8604)} & \first{0.7269} \footnotesize{(0.6183, 0.8818)} & \first{0.7234} \footnotesize{(0.6193, 0.8695)} \\
\bottomrule
        \end{tabular}
       }
    \caption{Comparative results with baselines on diversity increasing tasks. The best and second-best are highlighted in bold and underlined, respectively. The first row specifies the dataset name and target attributes. The second and third columns indicate whether appearance (App.) and attribute (Att.) information are used as inputs for each method.}
    \label{tab:exp_main_inc}
    \end{center}
\end{table*}

\begin{table}[t]
    \begin{center}
        \scalebox{0.75}{
            \begin{tabular}{l|ll|rr}
\toprule
                &&& {\bf VG (Time)} & {\bf I1M (Location)} \\
\midrule
                &App.&Att.& HM$\uparrow$ \footnotesize{(MAP$\uparrow$,DM$\uparrow$)} & HM$\uparrow$ \footnotesize{(MAP$\uparrow$,DM$\uparrow$)} \\
\midrule
            BLIP-2&-&- & 0.4200 \footnotesize{(0.8259, 0.2816)} & 0.5654 \footnotesize{(0.7917, 0.4398)}  \\
            \midrule
            Clustering&\checkmark&- & 0.4169 \footnotesize{(0.6867, 0.2993)} & 0.5564 \footnotesize{(0.7449, 0.4441)} \\
            MMR&\checkmark&- & 0.4578 \footnotesize{(0.8158, 0.3181)} & 0.5689 \footnotesize{(0.8025, 0.4406)} \\
            k-DPP&\checkmark&- & 0.4198 \footnotesize{(0.8284, 0.2811)} & 0.5637 \footnotesize{(0.7901, 0.4382)}  \\
\midrule
            Clustering&-&\checkmark & 0.5182 \footnotesize{(0.6076, 0.4518)} & 0.5380 \footnotesize{(0.7159, 0.4309)} \\
            MMR&-&\checkmark & 0.5304 \footnotesize{(0.7323, 0.4158)} & 0.6289 \footnotesize{(0.7694, 0.5317)} \\
            k-DPP&-&\checkmark & 0.4039 \footnotesize{(0.7335, 0.2787)} & 0.5515 \footnotesize{(0.7397, 0.4397)} \\
\midrule
            Clustering&\checkmark&\checkmark & 0.5137 \footnotesize{(0.6027, 0.4476)} & 0.5532 \footnotesize{(0.7465, 0.4394)} \\
            MMR&\checkmark&\checkmark & \second{0.5311} \footnotesize{(0.7495, 0.4113)} & \second{0.6301} \footnotesize{(0.7782, 0.5294)} \\
            k-DPP&\checkmark&\checkmark & 0.4917 \footnotesize{(0.6567, 0.3930)} & 0.5995 \footnotesize{(0.6741, 0.5397)} \\
\midrule
            MS-DPP &\checkmark&\checkmark & \first{0.5746} \footnotesize{(0.6616, 0.5079)} & \first{0.6393} \footnotesize{(0.7598, 0.5518)} \\
\bottomrule
        \end{tabular}
   }
    \caption{Comparative results on the tasks of increasing diversity in appearance while decreasing diversity in an attribute. 
            The best and second-best are highlighted in bold and underlined, respectively. 
            The first row shows the dataset name and the target attribute. 
            The second and third columns indicate whether appearance (app.) and attribute (att.) information are used as inputs for each method.}
    \label{tab:exp_main_dec}
    \end{center}
\end{table}

    \subsection{Experimental Settings}
    \label{sec:exp}
    \subsubsection{Experimental Tasks} We evaluate the proposed method on four CDR-CA tasks.
    The first two tasks are diversity increasing tasks: 1) increasing diversity in both appearance and shooting time, and 2) increasing diversity in both appearance and shooting location. 
    The latter two tasks involve mixed objectives: 3) increasing diversity in appearance while decreasing diversity in shooting time, and 4) increasing diversity in appearance while decreasing diversity in shooting location.

    \subsubsection{Datasets} We use three datasets, i.e., Visual Genome (VG)~\cite{vg}, Incidents 1M (I1M)~\cite{weber2020eccv,I1M}, and PixelProse (PP)~\cite{pp}, as they have enough amount of images and have necessary attributes.
    We employ VG for tasks involving shooting time, I1M for tasks involving shooting location, and PP for tasks involving shooting time or location.
    We selected images with the necessary attributes from each dataset.
    Attributes were obtained from EXIF information~\footnote{Please refer to Sec. 3.1 in the supplementary material for the detailed dataset descriptions and preprocessing.}.
    The resulting numbers of images are 649, 22,547, and 25,151 for VG, I1M, and PP, respectively.
    For text queries, we use fifteen object names in VG, the disaster and shooting locations types annotated in I1M, and the 1,000 randomly sampled dense captions in PP.
    We use 20\% of images in each dataset as validation set, with the remaining images constituting the test set.

    \subsubsection{Evaluation Metrics}
    
    {\noindent \bf - Overall Metric:} Our primary metric is the harmonic mean (HM) of the diversity metric and the retrieval accuracy.
    
    {\noindent \bf - Diversity Metric:}
    As the diversity metric (DM), we report the harmonic mean of the Vendi scores ($VS_{0.1}$)~\cite{vendi,vendi2} on each attribute.
    $VS_{0.1}$ is a metric that evaluates the diversity of each attribute in the top-$K$ result.
    The higher Vendi score indicates higher diversity.
    Given that the Vendi score can range from 0 to $K$, we normalize the score as $VS_{0.1}/K$ for attributes whose diversity is to be increased and $1-VS_{0.1}/K$ for attributes whose diversity is to be decreased.
    We set $K=20$ across all tasks.
    A higher value of DM indicates better performance due to the normalization.

    {\noindent \bf - Retrieval Metric:}
    For retrieval accuracy, we use two common metrics: 1) Mean Average Precision (MAP) for VG and I1M, and 2) Normalized Cumulative Semantic Score at 10 (N@10)~\cite{Biten_2022_WACV} for PP.
    MAP represents the average value of AP at each query, while N@10 represents the ratio of the retrieved semantic score to the ground truth semantic score.
    MAP is a suitable metric for VG and I1M due to multiple relevant images per query, whereas N@10 is appropriate for PP, which contains only one relevant image per query.
    A higher value of each metric indicates better performance.

    \subsubsection{Implementation Details}
    We used BLIP-2~\cite{BLIP2BootstrappingLanguageImagea}, a widely used retrieval model, to calculate the relevance score between the query text and each image.
    We used the BLIP-2 model finetuned with the MSCOCO dataset~\cite{lin2014microsoft}, and employed the ITC mechanism of BLIP-2. 

    We employed BLIP-2s' image features to calculate an appearance similarity.
    BLIP-2 outputs 32 features per image. We use the average feature ${{\mathbf f}_i}$ for each image.
    We defined the appearance similarity as the inverse of the distance $1/(\|{\mathbf f}_i-{\mathbf f}_j\|_2+1)$.
    For shooting times or locations, similarity was also evaluated using the inverse of the distance between embedded features.
    Shooting time was embedded as a two dimensional vector $(\cos z, \sin z)$, where $z=\frac{t}{24\times60}\pi, t={\rm hour}\times60+{\rm minute}$.
    Shooting location was embedded as a three dimensional vector $(\cos(lat)\cos(lon), \cos(lat)\sin(lon), \sin(lat))$.

    We first select the top-200 images by BLIP-2 and then re-rank them using the proposed method to refine diversity.

    The hyperparameters of the proposed method were tuned by the grid-search algorithm on the validation set in terms of the HM.
    The trade-off parameter $\theta$ between retrieval accuracy and diversity was selected from 0.75 to 0.95 in increments of 0.05.
    The attribute weights $\seto{w_i}$ were selected from 0.1 to 0.9 in increments of 0.2 and normalized to sum to 1.
    Whether to use TN or not was determined by the grid search algorithm.
    
    \subsubsection{Baselines} 
    We benchmark our method against the original results from BLIP-2 and three single-source diversification techniques: Clustering-based method, MMR, and k-DPP.
    For the diversity increasing tasks, we employ the standard MMR and k-DPP algorithms with average appearance features or embedded attribute features.
    For the diversity decreasing tasks, we slightly modify MMR and k-DPP; we multiply the similarity of each attribute by -1.
    As multi-source baselines, we input concatenated features of appearance and attribute features to the Clustering-based method and MMR.
    Attribute weights $w_i$ and task indicator $s_i$ are multiplied to feature vectors before the concatenation.
    As a multi-source DPP baseline, we input the average weighted similarity matrix to k-DPP.

    The hyperparameters of the baselines were also tuned using the same strategy as the proposed method.
    Please refer to Sec. 3.2 in the supplementary material for the other detailed implementation of the baselines.

    \subsection{Comparative Results}

    \subsubsection{Results for Diversity Increasing Tasks} Table~\ref{tab:exp_main_inc} shows the comparative results of MS-DPP and the baselines for the diversity increasing tasks.
    Our MS-DPP outperforms the baselines in HM for all tasks, demonstrating its capability to enhance the diversity of multiple attributes while preserving retrieval accuracy.
    This result supports the effectiveness of the MS-DPPs' formulation, which is based on the product of multiple DPP models and the aggregation of multiple attributes in the tangent space of the SPD manifold.

    The conventional single-source baselines relying on appearance features alone demonstrate comparatively low performance.
    This underscores the insufficiency of considering only image appearance for CDR-CA tasks, thus highlighting the necessity of integrating multiple attributes.

    Furthermore, our method outperforms multi-source baselines, which show only marginal improvements over their single-source counterparts.
    This indicates that merely aggregating multiple attributes is inadequate for CDR-CA tasks, whereas our method effectively aggregates multiple attributes for better performance.

    \begin{figure}
        \centering
        \includegraphics[width=\columnwidth]{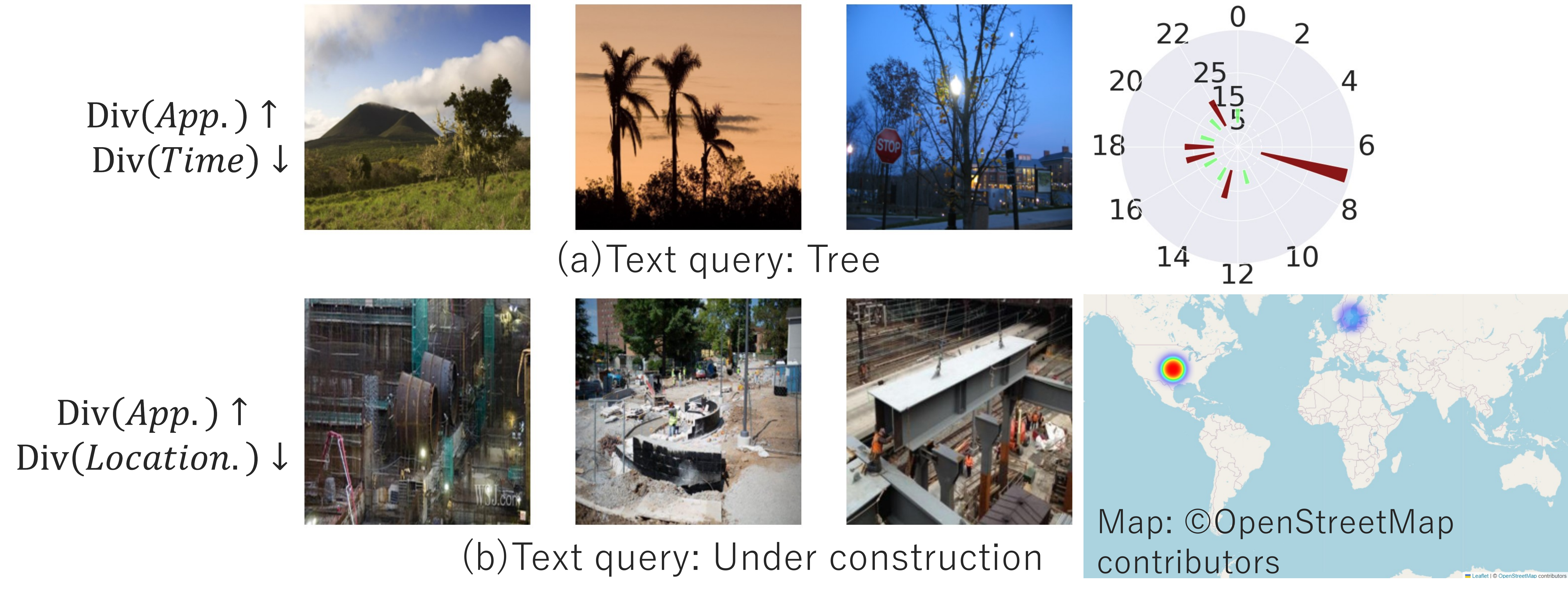}
        \caption{Examples of the results by MS-DPPs.
        The upper figures show shooting time refinement results, including the top three images and polar histograms of shooting times in the top 20 images.
        The upward bar of the histogram is at 0:00, and time moves clockwise, rounding the clock in 24 hours.
        The right figures show location refinement results, including heatmaps of the locations in the top 20 images.
        }
        \label{fig:exp_example}
    \end{figure}

    \subsubsection{Results for Diversity Decreasing Tasks}
    Table~\ref{tab:exp_main_dec} displays the results of our method and baselines for tasks focused on increasing appearance diversity while reducing diversity in a specific attribute. Consistent with previous findings, our method demonstrates superior performance, confirming its efficacy. Notably, there is a significant improvement in diversity scores, supporting the effectiveness of our attribute integration approach in these tasks.
    
    Figure~\ref{fig:exp_example} shows the results of two CDR-CA tasks by MS-DPPs.
    The results demonstrate that our method can effectively refine the diversity of multiple attributes while maintaining highly relevant images~\footnote{The supplementary material includes additional examples.}.

    \subsubsection{Effect of Tangent Normalization}
    
\iftrue
\begin{table}[t]
    \begin{center}
        \begin{subtable}[t]{0.45\textwidth}
            \centering
            \scalebox{0.75}{
                \begin{tabular}{l|ll|rr}
\toprule
                &TVs&$\M$& {\bf VG (Time)} & {\bf I1M (Location)} \\
\midrule
                &-&- & \first{0.8649} \small{(0.8220, 0.9124)} & \first{0.8081} \small{(0.7618, 0.8604)} \\
                MS-DPP&\checkmark&- & \second{0.8557} \small{(0.8211, 0.8934)} & 0.7935 \small{(0.7649, 0.8244)} \\
                &\checkmark&\checkmark & 0.8425 \small{(0.7893, 0.9034)} & \second{0.7938} \small{(0.7651, 0.8247)} \\
\bottomrule
                \end{tabular}
            }
            \caption{}
            \label{tab:exp2_tn_inc}
        \end{subtable}%
        \hfill
        \begin{subtable}[t]{0.45\textwidth}
            \centering
            \scalebox{0.75}{
                \begin{tabular}{l|ll|rr}
\toprule
                &TVs&$\M$& {\bf VG (Time)} & {\bf I1M (Location)} \\
\midrule
                &-&- & \first{0.5746} \small{(0.6616, 0.5079)} & \fourth{0.6351} \small{(0.7577, 0.5466)}\\
                MS-DPP&\checkmark&-& \second{0.5574} \small{(0.6687, 0.4778)} & \second{0.6373} \small{(0.7650, 0.5462)} \\
                &\checkmark&\checkmark & \fourth{0.5558} \small{(0.6678, 0.4759)} & \first{0.6393} \small{(0.7598, 0.5518)} \\
\bottomrule
                \end{tabular}
            }
            \caption{}
            \label{tab:exp2_tn_dec}
        \end{subtable}
        \caption{Ablation study on TN for the tasks of (a) increasing diversities of all attributes and (b) increasing diversity in appearance while decreasing diversity in time or location.
        The second and third columns indicate whether TN is applied for tangent vectors of similarity matrices $\Sim_i$ and the unified similarity matrix $\M$, respectively.}
        \label{tab:exp2_combined}
    \end{center}
\end{table}
\fi

    Tables~\ref{tab:exp2_combined} and \ref{tab:exp_controll} show the results of the proposed method with and without TN.
     Although the overall performance is not always improved, PRS is significantly improved by TN, as shown in Table~\ref{tab:exp_controll}.
     High PRS is crucial for practical applications, enabling users to adjust each attribute's importance interactively.
     This result supports the effectiveness of TN in improving the reflection of user preferences, which is the primary motivation for introducing TN.

    \begin{table}[tb]
        \centering
        \scalebox{0.75}{
        \begin{tabular}{lrrrr}
            \toprule
             & \multicolumn{2}{c}{{\textbf{VG (Time)}}} & \multicolumn{2}{c}{{\textbf{I1M (Location)}}} \\
             & Inc. & Dic. & Inc. & Dic. \\
            \midrule
            Clustering & 1.0876 & 0.8445 & 0.7180 & 0.4514 \\
            MMR & -0.3615 & 0.0873 & -0.2449 & 0.1192 \\
            k-DPP & 1.3000 & -0.1115 & 0.0171 & -0.0561 \\
            \midrule
            MS-DPP & -0.2842 & 0.7341 & -0.7272 & 0.4821 \\            \hdashline
            ~+ TN (TVs) & \second{1.3586} & \second{1.3358} & \second{1.2849} & \second{1.1936} \\
            ~+ TN (TVs + M) & \first{1.4073} & \first{1.4196} & \first{1.2955} & \first{1.2642} \\
            \bottomrule
        \end{tabular}
        }
        \caption{PRS~\eqref{eq:prs} on diversity increase (Inc.) and decreasing (Dec.) tasks.}
        \label{tab:exp_controll}
    \end{table}

    \section{Conclusion}
    We proposed a novel task, Contextual Diversity Refinement of Composite Attributes (CDR-CA), which is a generalization of the standard result diversification task.
    CDR-CA targets multiple attributes of each image and has mixed objectives, i.e., increasing diversity in some attributes while decreasing diversity in others.
    CDR-CA opens up the possibility of addressing various practical applications, unlike the standard result diversification task.
    To address CDR-CA, we proposed a tailored method, Multi-Source Determinantal Point Processes (MS-DPPs).
    The basis of MS-DPP is to consider the product of multiple DPP models.
    We unified the basic formulation into a single DPP model through operations on the SPD manifold.
    This unification enables to employ the established optimization techniques from the standard DPPs and motivates the introduction of Tangent Normalization (TN), a novel technique that aligns attribute diversities with user-defined weights.
    Our experiments demonstrated the effectiveness of MS-DPPs in enhancing the diversity of multiple attributes while preserving highly relevant images.

\newpage
\renewcommand{\thetable}{\Alph{table}} 
\renewcommand\thesection{\Alph{section}}
\renewcommand\thefigure{\Alph{figure}}

\setcounter{section}{0}
\setcounter{table}{0}
\setcounter{figure}{0}

{\noindent\LARGE{{\textbf Supplementary Material}}}

    \section{Preliminary: SPD Manifold}
    \label{sec:SPD}
    Our MS-DPPs are expressed with operations on the Symmetric Positive Definite (SPD) manifold.
    Thus, we introduce the concept of an SPD manifold and two important operations: the matrix logarithm and exponential maps~\cite{pennec2006riemannian}.
    A set of SPD matrices forms a Riemannian manifold called the SPD manifold. 
    As an SPD manifold is not an Euclidean space, we need to use a Riemannian distance to measure the distance between two SPD matrices.
    
    The alternative approach is to use a tangent vector space, an Euclidean space that approximates the SPD manifold at a point.
    Matrix logarithm $\logm$ maps an SPD matrix to a tangent vector with the identity matrix as the origin.
    This map enables us to use Euclidean operations on the tangent vector space, such as distance measurement, addition, and scalar multiplication.
    Matrix exponential $\expm$ is the inverse map of $\logm$, i.e., it maps a tangent vector to an SPD matrix.
    The matrix exponential and logarithm of an SPD matrix are defined as follows:
    \begin{align}
        \expm\Sim=\Phi \diag{e^{\lambda_1},e^{\lambda_2},...,e^{\lambda_d}}\Phi^T\\
        \logm\Sim=\Phi \diag{\log{\lambda_1},\log{\lambda_2},...,\log{\lambda_d}}\Phi^T,
    \end{align}
    where $\Sim\in{\mathbb R}^{d\times d}$ is an SPD matrix, $\Phi$ is the matrix lining eigenvectors up as columns, $\seto{\lambda_i}$ are the eigenvalues of $\Sim$.

    \section{Unified Representation for MS-DPP}
    \begin{theorem}
        {\rm\textbf{Unified Representation:}} The composite DPP model ($f_{base}$) in (4) of the main paper can be expressed as a single DPP model with a unified similarity matrix $\M$ as follows:
    \begin{align}
        f_{base}(\Y{g}) 
        \propto\mdet{\Rmat_{\Y{g}}}\mdet{\M}\mdet{\Rmat_{\Y{g}}},\label{eq:msddp1}
    \end{align}
    where $\M=\expm{\sum_{i=1}^{N_A}{\logm \Sim_{i,\Y{g}}}}$.
    \end{theorem}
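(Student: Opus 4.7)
The plan is to reduce the claim to a single clean algebraic identity, namely
$$\prod_{i=1}^{N_A} \mdet{\Sim_{i,\Y{g}}} \;=\; \mdet{\M},$$
after which substitution into $f_{base}$ yields the stated unified form. The central tool is the spectral identity $\log\mdet{A} = \tr(\logm A)$, which holds for any SPD matrix $A$ by the spectral theorem: $\logm$ acts on eigenvalues, $\det$ is the product of eigenvalues, and the two collapse to the sum of logs of eigenvalues when one takes the log.

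The steps I would carry out, in order, are: (i) take the logarithm of $\prod_{i} \mdet{\Sim_{i,\Y{g}}}$ to turn the product of determinants into a sum of log-determinants; (ii) apply the spectral identity to each factor to rewrite the sum as $\sum_{i} \tr(\logm \Sim_{i,\Y{g}})$; (iii) invoke linearity of trace to pull the summation inside, giving $\tr\bigl(\sum_{i} \logm \Sim_{i,\Y{g}}\bigr)$; (iv) observe that the argument is by definition $\logm \M$, which is well defined because each $\logm \Sim_{i,\Y{g}}$ is symmetric, hence so is their sum, and $\expm$ acts as the inverse of $\logm$ on symmetric matrices; (v) apply the spectral identity in the reverse direction to obtain $\tr(\logm \M) = \log\mdet{\M}$; (vi) exponentiate to conclude the key identity. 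Plugging it into the definition of $f_{base}$ gives $f_{base}(\Y{g}) = \mdet{\Rmat_{\Y{g}}}^{2 N_A} \mdet{\M}$, which is to be compared with $f_{unify}(\Y{g}) = \mdet{\Rmat_{\Y{g}}}^{2} \mdet{\M}$.

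The main obstacle is that the two expressions differ by the factor $\mdet{\Rmat_{\Y{g}}}^{2(N_A-1)}$, which is a function of $\Y{g}$ rather than a true constant, so the symbol $\propto$ must be interpreted with care. I would address this by separating the diversity part of the model from the relevance part: the similarity factors unify cleanly and exactly via the log/exp identity on the SPD manifold, while the relevance exponent discrepancy is absorbed by the tunable scaling that appears in the general MS-DPP formulation \eqref{eq:msdppw}, where $\Rmat_{\Y{g}}$ is replaced by $e^{\alpha \Rmat_{\Y{g}}}$ precisely to control the accuracy-diversity trade-off. Framed this way, the proposition expresses that the diversity contribution of multiple attributes collapses, exactly, to a single determinant with unified similarity matrix $\M$, which is the statement actually exploited by the MS-DPP algorithm to reuse standard DPP optimization machinery.
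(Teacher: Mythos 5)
Your proof is correct and follows essentially the same route as the paper's: both rest on the spectral identity $\log\mdet{A}=\tr(\logm A)$ for an SPD matrix $A$, which collapses $\prod_{i}\mdet{\Sim_{i,\Y{g}}}$ into the single determinant $\mdet{\M}$ with $\M=\expm\sum_i\logm\Sim_{i,\Y{g}}$. You are in fact more careful than the paper on the one delicate point: the paper's derivation hides the residual relevance factor (it writes an unexplained $e^{N_A}$ where the algebra actually produces $\mdet{\Rmat_{\Y{g}}}^{2N_A}$ rather than $\mdet{\Rmat_{\Y{g}}}^{2}$) behind a ``$\propto$'' even though that factor depends on $\Y{g}$, whereas you name this discrepancy explicitly and observe that it is absorbed by the tunable $e^{\alpha\Rmat_{\Y{g}}}$ scaling in the general MS-DPP model.
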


    \begin{proof}
    Let $\Amat_{i}= \logm \Sim_{i,\Y{g}}$. It is known that $\mdet{\Sim_i}= e^{\tr{\Amat_i}}$.
    Then, we rewrite $f_{base}$ with the natural logarithm as follows:
    \begin{align}
        \label{eq:msdpps-log}
        \log{f_{base}(\Y{g})}
        =a+\sum_{i=1}^{N_A}{\log\mdet{\Sim_{i,\Y{g}}}}
        \nonumber\\
        =a+\sum_{i=1}^{N_A}{\log e^{\tr{A_i}}}
        =a+\sum_{i=1}^{N_A}\tr{{\Amat_i}},
    \end{align}
    where $a={\log\mdet{\Rmat_{\Y{g}}}}^{2N_A}$. 
    From \eqref{eq:msdpps-log}, we can obtain the following equation:
    \begin{align}
        f_{base}(\Y{g})
        =e^{N_A}\mdet{\Rmat_{\Y{g}}}(e^{\tr \sum_{i=1}^{N_A}{\Amat_{i,\Y{g}}}})\mdet{\Rmat_{\Y{g}}}
        \nonumber\\\propto\mdet{\Rmat_{\Y{g}}}\mdet{\expm{\sum_{i=1}^{N_A}{\Amat_{i,\Y{g}}}}}\mdet{\Rmat_{\Y{g}}} 
        \nonumber\\=\mdet{\Rmat_{\Y{g}}}\mdet{\expm{\sum_{i=1}^{N_A}{\logm \Sim_{i,\Y{g}}}}}\mdet{\Rmat_{\Y{g}}}
        =\mdet{\Rmat_{\Y{g}}}\mdet{\M}\mdet{\Rmat_{\Y{g}}}.
    \end{align}
    \end{proof}

    \section{Experimental Settings and Additional Results}
    \subsection{Datasets} We use three datasets, i.e., Visual Genome (VG)~\cite{vg}, Incidents 1M (I1M)~\cite{weber2020eccv,I1M}, and PixelProse (PP)~\cite{pp}, as they have enough amount of images and have necessary attributes.
    We employ VG for tasks involving shooting time, I1M for tasks involving shooting location, and PP for tasks involving shooting time or location.
    We selected images with the necessary attributes from each dataset.
    Attributes were obtained from EXIF information of each image in VG and PixelProse. In I1M, shooting locations were obtained following the methodology described in the original paper~\cite{I1M}.
    In I1M, we added a small amount of uniform noise to locations that are mapped to exactly the same location.
    Resulting the numbers of images are 649, 22,547 and 25,151 for VG, I1M, and PP, respectively.

    For text queries of each dataset, we use fifteen dominant object names in VG, the disaster types and shooting locations types originally annotated in I1M, and the 1,000 randomly sampled dense captions in PP.
    The text queries of VG are ``person'', ``building'', ``table'', ``vehicle'', ``animal'', ``tree'', ``sky'', ``road'', ``chair'', ``light'', ``car'', ``desk'', ``bird'', ``apple'', and ``dog''.

    We use 20\% of images in each dataset as validation set, with the remaining images constituting the test set.
    The grid search algorithm utilized the validation set for hyperparameter tuning.

    \subsection{Baselines} 
    We benchmark our method against the original results from BLIP-2, and three single-source diversification techniques: Clustering-based method, MMR, and k-DPP.
    For the diversity increasing tasks, we employ the standard MMR and k-DPP algorithms with average image appearance features or embedded attribute features.
    For the diversity decreasing tasks, we slightly modify MMR and k-DPP; we multiply the similarity matrix of each attribute by -1.

    The Clustering-based method is implemented as follows: k-means clustering is first applied to the input features. Clusters are ranked based on their average value of retrieval scores in each cluster, and images are sequentially selected from clusters according to this ranking and relevance scores for diversity-increasing tasks. For diversity-decreasing tasks, all images in the first-ranked cluster are selected, followed by the second-ranked cluster, and so on.

    As multi-source baselines, we input concatenated features of appearance and attribute features to the Clustering-based method and MMR.
    Attribute weights $w_i$ and task indicator $s_i$ are multiplied to feature vectors of each attribute, before the concatenation.
    As a multi-source DPP baseline, we input the average weighted similarity matrix of each attribute to k-DPP.
    Attribute weights and task indicators are also multiplied by the similarity matrix of each attribute before the averaging.

    The hyperparameters of the baselines were tuned by the grid-search algorithm on the validation set in terms of the HM.
    The trade-off parameter $\theta$ between retrieval accuracy and diversity was selected from 0.01 to 0.9 in increments of 0.1.
    The attribute weights $\seto{w_i}$ were selected from 0.1 to 0.9 in increments of 0.2 and normalized to sum to 1.
    The number of clusters in the Clustering-based method was selected from 40, 60, and 80.

    \begin{figure}[tb]
            \centering
            \includegraphics[width=\columnwidth]{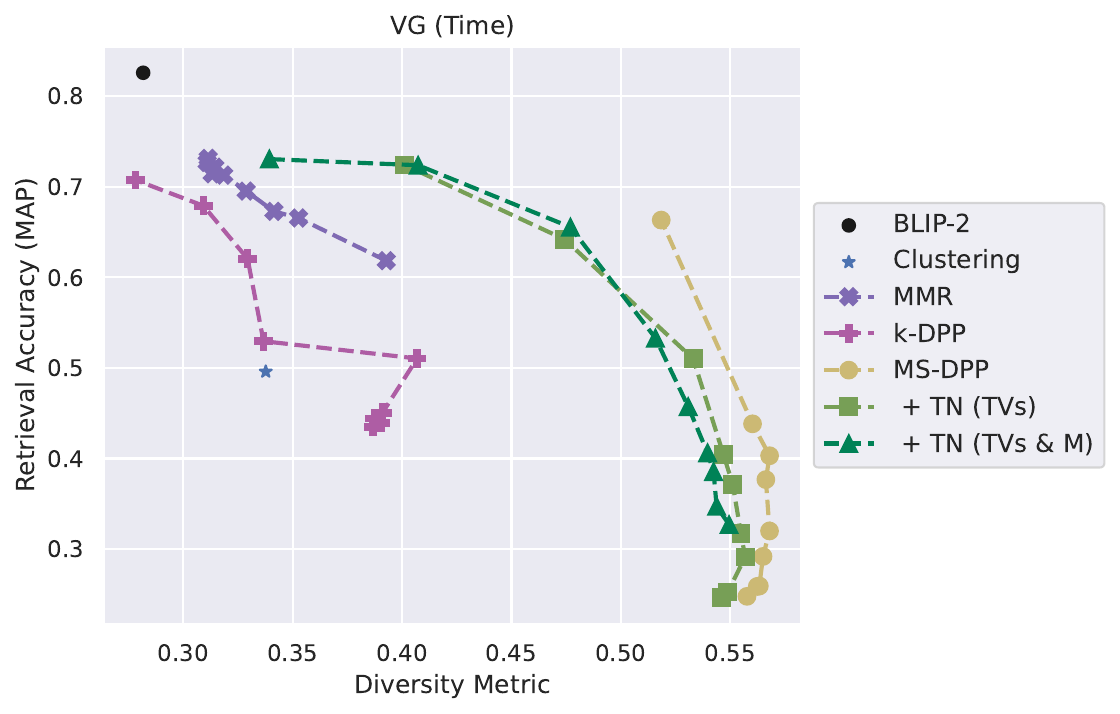}
            \caption{Relationship between retrieval accuracy and diversity measurements against the trade-off parameter $\theta$. The task is to increase diversity in appearance while decreasing diversity in shooting time. We set attribute weights $w_i=0.5$ for all attributes and vary $\theta$ from 0.1 to 0.9 with a step size of 0.1.}
                \label{fig:exp_alpha}
    \end{figure}

    \subsection{Additional Experimental Results}
    \subsubsection{Effect of Trade-off Parameter}
    MS-DPPs have the trade-off parameter $\theta$, which balances retrieval accuracy and diversity metric.
    Figure~\ref{fig:exp_alpha} depicts retrieval accuracies and diversity metrics as functions of $\theta$.
    As anticipated, $\theta$ influences the balance: higher $\theta$ enhances retrieval accuracy, whereas lower $\theta$ boosts diversity measurements.
    This demonstrates that the trade-off parameter effectively balances retrieval accuracy and diversity adjustment.
    Additionally, TN expands the range of achievable accuracies and diversities, further confirming its efficacy in enhancing this trade-off.
    This improvement is attributed to normalization, which mitigates the influence of tangent vector norms by aligning them with the relevance matrix norm.

    \subsubsection{Effect of the Number of Attributes}
    Table \ref{tab:three_sources} shows the results of the three attributes mixed-direction task on PixelProse and the single-attribute (shooting time) diversity-decreasing task on Visual Genome.
    The proposed method outperforms the baselines in both tasks, suggesting that the proposed method is effective in diverse CDR-CA settings.

    \begin{table}[t]
    \begin{center}
        \scalebox{0.75}{
            \begin{tabular}{lr|r}
\toprule
                & {\bf PP} & {\bf VG} \\
\midrule
                & HM$\uparrow$ \footnotesize{(N@10$\uparrow$,DM$\uparrow$)}& HM$\uparrow$ \footnotesize{(MAP$\uparrow$,DM$\uparrow$)} \\
\midrule
            BLIP-2 & 0.2680 \footnotesize{(0.6214, 0.1708)}  & 0.3056 \footnotesize{(0.8259, 0.1875)} \\
            k-DPP & 0.3067 \footnotesize{(0.4761, 0.2262)} & 0.2938 \footnotesize{(0.7359, 0.1836)}\\
\midrule
            MS-DPP & \textbf{0.3535} \footnotesize{(0.4580, 0.2878)} &  \textbf{0.4570} \footnotesize{(0.7069, 0.3377)}\\
\bottomrule
        \end{tabular}
    }
    \caption{Results on (left) the mixed task of increasing diversity in appearance while decreasing diversity in shooting time and location, and (right) the diversity decreasing task in shooting time.}
    \label{tab:three_sources}
    \end{center}
    \begin{center}
        \scalebox{0.75}{
            \begin{tabular}{lrrrrr}
\toprule
                N& 200 & 400 & 600 & 800 & 1,000 \\
\midrule
                k-DPP & 0.47 & 1.92 & 4.14 & 7.36 & 11.60 \\
				MS-DPP & 0.62 & 2.22 & 4.72 & 8.24 & 12.73 \\
\bottomrule
        \end{tabular}
   }
    \caption{
    Computational time [seconds] per one retrieval for different gallery sizes (N).
    The experiment is conducted on GTX1080Ti, 128GB RAM, and E5-2620 v4 with PyTorch 2.4.1 CUDA 11.8.
    }
    \label{tab:runtime}
    \end{center}
\end{table}

    \subsubsection{Runtime}
    Table \ref{tab:runtime} shows computation times for varying gallery sizes.
    Matrix log/exp operations increase runtime but are not dominant.
    The common practice in result diversification is to re-rank only the top-K images to mitigate the runtime issue.
    For faster approximations, we can use efficient log/exp manifold operations defined by other SPD manifold metrics, e.g. Log-Euclidean and Bures Wasserstein metrics.

    \subsubsection{Additional Examples}
    Figures~\ref{fig:supp_time_example} and~\ref{fig:supp_loc_example} show additional examples of the results of MS-DPPs on four CDR-CA tasks.
    MS-DPPs effectively refine the diversity of multiple attributes, faithfully reflecting the application contexts while maintaining highly relevant images.

    \begin{figure*}[tb]
        \centering
        \includegraphics[width=\textwidth]{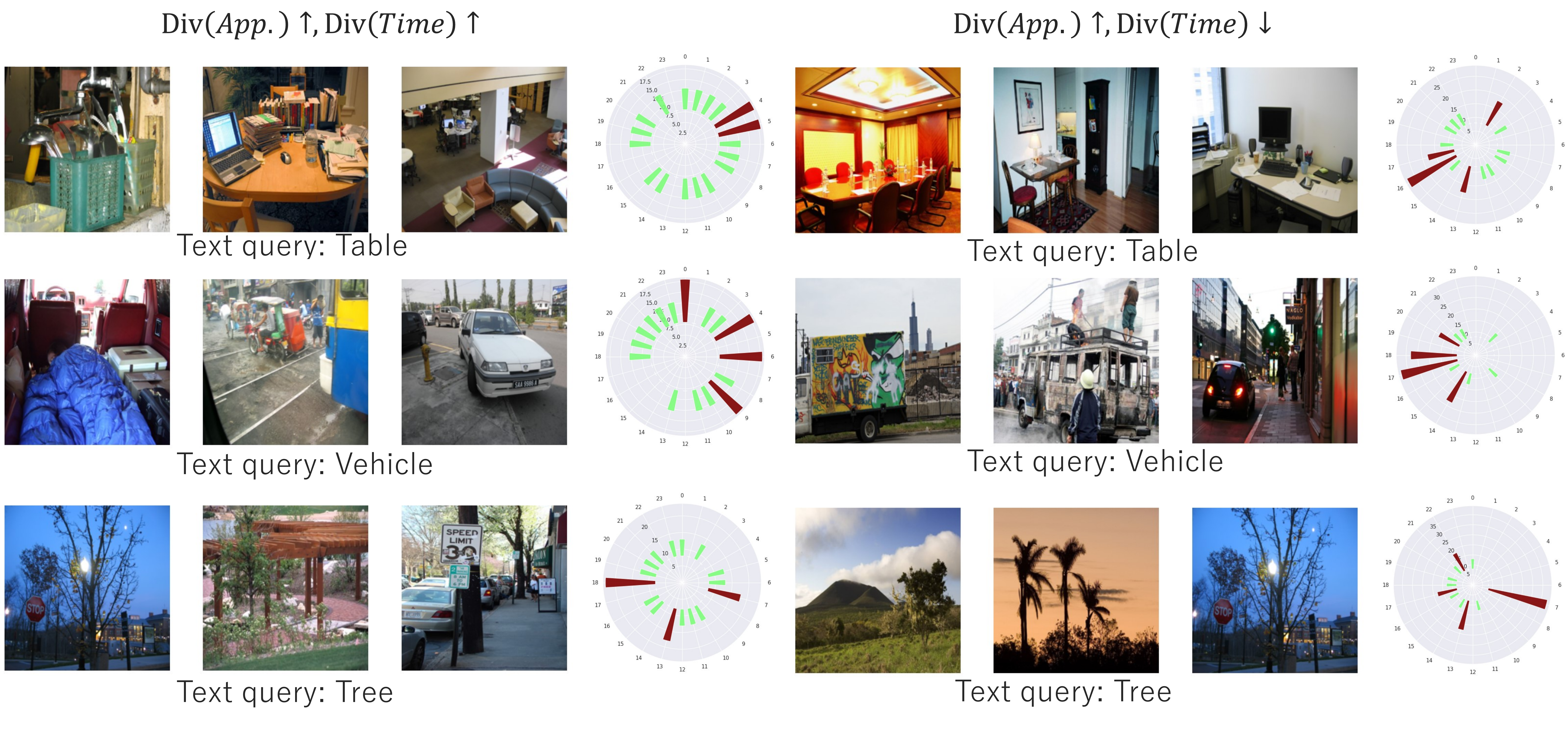}
        \caption{Additional examples of the results of MS-DPPs for three text queries. Each result includes the top three images and a polar histogram of the shooting times in the top 20 results. The upward bar of the histogram is at 0:00, and time moves clockwise, rounding the clock in 24 hours.}
        \label{fig:supp_time_example}
    \end{figure*}

    \begin{figure*}[tb]
        \centering
        \includegraphics[width=\textwidth]{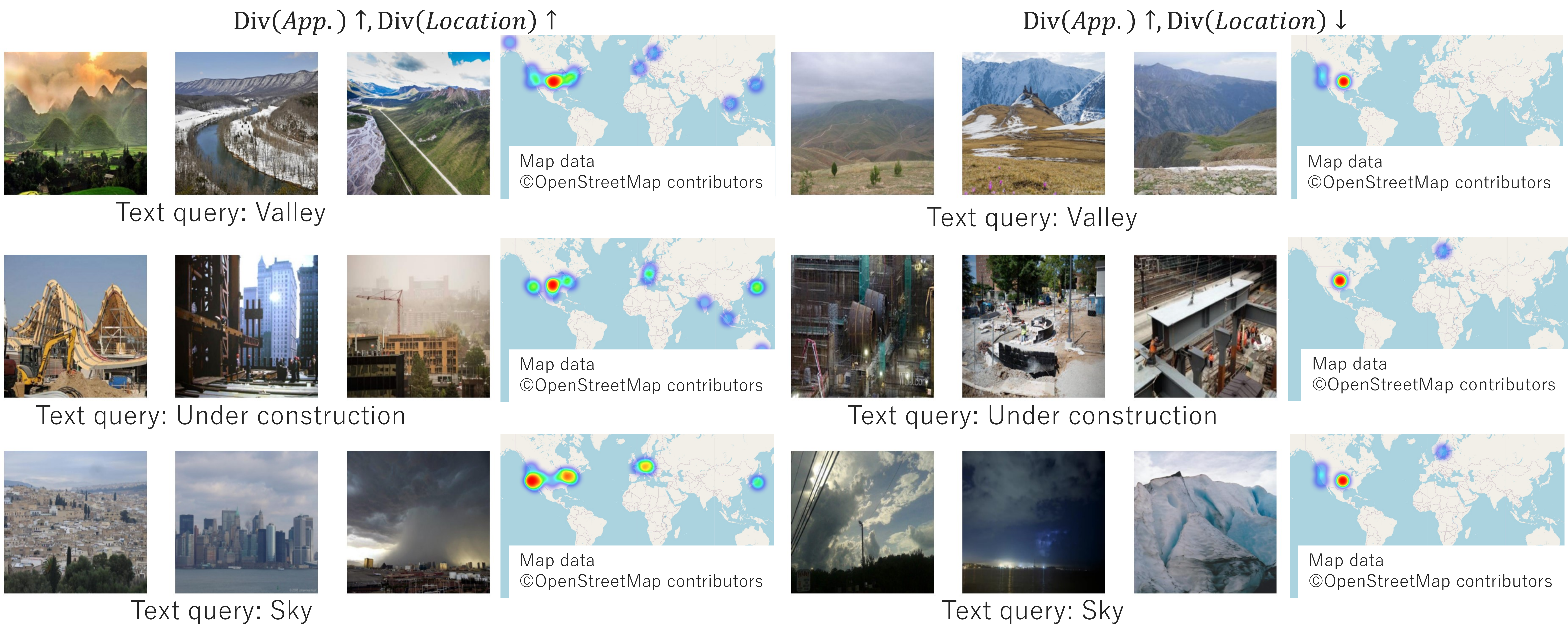}
        \caption{Additional examples of the results of MS-DPPs. Each result includes the top three images and a heatmap of the shooting locations in the top 20 results.}
        \label{fig:supp_loc_example}
    \end{figure*}

\clearpage
\bibliographystyle{named}
\bibliography{refs}

\begin{thebibliography}{}

\bibitem[\protect\citeauthoryear{Affandi \bgroup \em et al.\egroup }{2014}]{pmlr-v32-affandi14}
Raja~Hafiz Affandi, Emily Fox, Ryan Aerdams, and Ben Taskar.
\newblock {Learning the Parameters of Determinantal Point Process Kernels}.
\newblock In {\em ICML}, volume~32, pages 1224--1232. PMLR, 22--24 Jun 2014.

\bibitem[\protect\citeauthoryear{Benavent \bgroup \em et al.\egroup }{2019}]{FCAbasedKnowledgeRepresentation}
Xaro Benavent, Angel Castellanos, Esther {de Ves}, Ana {Garc{\'i}a-Serrano}, and Juan Cigarr{\'a}n.
\newblock {{FCA-based}} knowledge representation and local generalized linear models to address relevance and diversity in diverse social images.
\newblock {\em Future Generation Computer Systems}, 100:250--265, November 2019.

\bibitem[\protect\citeauthoryear{Biten \bgroup \em et al.\egroup }{2022}]{Biten_2022_WACV}
Ali~Furkan Biten, Andr\'es Mafla, Llu{\'\i}s G\'omez, and Dimosthenis Karatzas.
\newblock {Is an Image Worth Five Sentences? A New Look Into Semantics for Image-Text Matching}.
\newblock In {\em WACV}, pages 1391--1400, January 2022.

\bibitem[\protect\citeauthoryear{Boteanu \bgroup \em et al.\egroup }{2015}]{HierarchicalClusteringPseudorelevance}
Bogdan Boteanu, Ionu{\c t} Mironic{\u a}, and Bogdan Ionescu.
\newblock Hierarchical clustering pseudo-relevance feedback for social image search result diversification.
\newblock In {\em {{International Workshop}} on {{Content-Based Multimedia Indexing}} ({{CBMI}})}, pages 1--6, June 2015.

\bibitem[\protect\citeauthoryear{Boteanu \bgroup \em et al.\egroup }{2017}]{PseudorelevanceFeedbackDiversification}
Bogdan Boteanu, Ionu{\c t} Mironic{\u a}, and Bogdan Ionescu.
\newblock Pseudo-relevance feedback diversification of social image retrieval results.
\newblock {\em Multimedia Tools and Applications}, 76(9):11889--11916, May 2017.

\bibitem[\protect\citeauthoryear{Bouhlel \bgroup \em et al.\egroup }{2017}]{HypergraphBasedRerankingModel}
Noura Bouhlel, Ghada Feki, Anis Ben~Ammar, and Chokri Ben~Amar.
\newblock A {{Hypergraph-Based Reranking Model}} for {{Retrieving Diverse Social Images}}.
\newblock In {\em Computer {{Analysis}} of {{Images}} and {{Patterns}}}, pages 279--291. Springer International Publishing, 2017.

\bibitem[\protect\citeauthoryear{Cao \bgroup \em et al.\egroup }{2022}]{ImagetextRetrievalSurvey}
Min Cao, Shiping Li, Juntao Li, Liqiang Nie, and Min Zhang.
\newblock {Image-Text {{Retrieval}}: {{A Survey}} on {{Recent Research}} and {{Development}}}.
\newblock In {\em IJCAI}, pages {5410--5417}, {2022}.

\bibitem[\protect\citeauthoryear{Carbonell and Goldstein}{1998}]{mmr}
Jaime Carbonell and Jade Goldstein.
\newblock {The use of MMR, diversity-based reranking for reordering documents and producing summaries}.
\newblock In {\em SIGIR}, page 335–336. Association for Computing Machinery, 1998.

\bibitem[\protect\citeauthoryear{Chen \bgroup \em et al.\egroup }{2018}]{FastGreedyMAP}
Laming Chen, Guoxin Zhang, and Hanning Zhou.
\newblock Fast greedy {{MAP}} inference for determinantal point process to improve recommendation diversity.
\newblock In {\em NeurIPS}, pages 5627--5638, December 2018.

\bibitem[\protect\citeauthoryear{Chen \bgroup \em et al.\egroup }{2020}]{chen2020uniter}
Yen-Chun Chen, Linjie Li, Licheng Yu, Ahmed~El Kholy, Faisal Ahmed, Zhe Gan, Yu~Cheng, and Jingjing Liu.
\newblock Uniter: Universal image-text representation learning.
\newblock In {\em ECCV}, 2020.

\bibitem[\protect\citeauthoryear{Chen \bgroup \em et al.\egroup }{2024}]{chen2023internvl}
Zhe Chen, Jiannan Wu, Wenhai Wang, Weijie Su, Guo Chen, Sen Xing, Muyan Zhong, Qinglong Zhang, Xizhou Zhu, Lewei Lu, et~al.
\newblock {Internvl: Scaling up vision foundation models and aligning for generic visual-linguistic tasks}.
\newblock In {\em CVPR}, pages 24185--24198, 2024.

\bibitem[\protect\citeauthoryear{Faghri \bgroup \em et al.\egroup }{2017}]{VSEImprovingVisualSemantic}
Fartash Faghri, David~J. Fleet, Jamie~Ryan Kiros, and Sanja Fidler.
\newblock {{{VSE}}++: {{Improving Visual-Semantic Embeddings}} with {{Hard Negatives}}}.
\newblock In {\em BMVC}, page~{}, {2017}.

\bibitem[\protect\citeauthoryear{Friedman and Dieng}{2023}]{vendi}
Dan Friedman and Adji~Bousso Dieng.
\newblock {The Vendi Score: A Diversity Evaluation Metric for Machine Learning}.
\newblock {\em TMLR}, 2023.

\bibitem[\protect\citeauthoryear{Frome \bgroup \em et al.\egroup }{2013}]{DeVise}
Andrea Frome, Greg~S Corrado, Jon Shlens, Samy Bengio, Jeff Dean, Marc\textquotesingle~Aurelio Ranzato, and Tomas Mikolov.
\newblock Devise: A deep visual-semantic embedding model.
\newblock In {\em NeurIPS}, volume~26, 2013.

\bibitem[\protect\citeauthoryear{Ionescu \bgroup \em et al.\egroup }{2016}]{ResultDiversificationSocial}
Bogdan Ionescu, Adrian Popescu, Anca-Livia Radu, and Henning M{\"u}ller.
\newblock {Result Diversification in Social Image Retrieval: A Benchmarking Framework}.
\newblock {\em Multimedia Tools and Applications}, 75(2):1301--1331, January 2016.

\bibitem[\protect\citeauthoryear{Ionescu \bgroup \em et al.\egroup }{2021}]{BenchmarkingImageRetrieval}
Bogdan Ionescu, Maia Rohm, Bogdan Boteanu, Alexandru~Lucian G{\^i}nsc{\u a}, Mihai Lupu, and Henning M{\"u}ller.
\newblock Benchmarking {{Image Retrieval Diversification Techniques}} for {{Social Media}}.
\newblock {\em IEEE TMM}, 23:677--691, 2021.

\bibitem[\protect\citeauthoryear{Kaur \bgroup \em et al.\egroup }{2021}]{ComparativeAnalysisCrossmodal}
Parminder Kaur, Husanbir~Singh Pannu, and Avleen~Kaur Malhi.
\newblock {Comparative Analysis on Cross-Modal Information Retrieval: {{A}} Review}.
\newblock {\em {Computer Science Review}}, 39:{100336}, {2021}.

\bibitem[\protect\citeauthoryear{Krishna \bgroup \em et al.\egroup }{2017}]{vg}
Ranjay Krishna, Yuke Zhu, Oliver Groth, Justin Johnson, Kenji Hata, Joshua Kravitz, Stephanie Chen, Yannis Kalantidis, Li-Jia Li, David~A Shamma, et~al.
\newblock {Visual genome: Connecting language and vision using crowdsourced dense image annotations}.
\newblock {\em IJCV}, 123:32--73, 2017.

\bibitem[\protect\citeauthoryear{Kulesza and Taskar}{2011}]{KDPPsFixedsizeDeterminantal}
Alex Kulesza and Ben Taskar.
\newblock K-{{DPPs}}: Fixed-size determinantal point processes.
\newblock In {\em ICML}, pages 1193--1200, June 2011.

\bibitem[\protect\citeauthoryear{Kulesza and Taskar}{2012}]{DeterminantalPointProcesses}
Alex Kulesza and Ben Taskar.
\newblock Determinantal {{Point Processes}} for {{Machine Learning}}.
\newblock {\em Foundations and Trends{\textregistered} in Machine Learning}, 5(2--3):123--286, December 2012.

\bibitem[\protect\citeauthoryear{Lee \bgroup \em et al.\egroup }{2018}]{StackedCrossAttentiona}
Kuang-Huei Lee, Xi~Chen, Gang Hua, Houdong Hu, and Xiaodong He.
\newblock {Stacked {{Cross Attention}} for {{Image-Text Matching}}}.
\newblock In {\em ECCV}, pages {212--228}, {2018}.

\bibitem[\protect\citeauthoryear{Li \bgroup \em et al.\egroup }{2021}]{albef}
Junnan Li, Ramprasaath Selvaraju, Akhilesh Gotmare, Shafiq Joty, Caiming Xiong, and Steven Chu~Hong Hoi.
\newblock {Align before {{Fuse}}: {{Vision}} and {{Language Representation Learning}} with {{Momentum Distillation}}}.
\newblock In {\em NeurIPS}, pages {9694--9705}, {2021}.

\bibitem[\protect\citeauthoryear{Li \bgroup \em et al.\egroup }{2022}]{blip}
Junnan Li, Dongxu Li, Caiming Xiong, and Steven Hoi.
\newblock {{{BLIP}}: {{Bootstrapping Language-Image Pre-training}} for {{Unified Vision-Language Understanding}} and {{Generation}}}.
\newblock In {\em ICML}, pages {12888--12900}, {2022}.

\bibitem[\protect\citeauthoryear{Li \bgroup \em et al.\egroup }{2023}]{BLIP2BootstrappingLanguageImagea}
Junnan Li, Dongxu Li, Silvio Savarese, and Steven Hoi.
\newblock {{{BLIP-2}}: {{Bootstrapping Language-Image Pre-training}} with {{Frozen Image Encoders}} and {{Large Language Models}}}.
\newblock In {\em ICML}, pages {19730--19742}, {2023}.

\bibitem[\protect\citeauthoryear{Lin \bgroup \em et al.\egroup }{2014}]{lin2014microsoft}
Tsung-Yi Lin, Michael Maire, Serge Belongie, James Hays, Pietro Perona, Deva Ramanan, Piotr Doll{\'a}r, and C~Lawrence Zitnick.
\newblock Microsoft coco: Common objects in context.
\newblock In {\em ECCV}, pages 740--755. Springer, 2014.

\bibitem[\protect\citeauthoryear{Mariet \bgroup \em et al.\egroup }{2019}]{pmlr-v89-mariet19b}
Zelda Mariet, Mike Gartrell, and Suvrit Sra.
\newblock Learning determinantal point processes by corrective negative sampling.
\newblock In Kamalika Chaudhuri and Masashi Sugiyama, editors, {\em AISTATS}, volume~89 of {\em Proceedings of Machine Learning Research}, pages 2251--2260. PMLR, 16--18 Apr 2019.

\bibitem[\protect\citeauthoryear{Odile}{1975}]{1360574095929215232}
Macchi Odile.
\newblock The coincidence approach to stochastic point processes.
\newblock {\em Advances in Applied Probability}, 7(01):83--122, 03 1975.

\bibitem[\protect\citeauthoryear{Pasarkar and Dieng}{2024}]{vendi2}
Amey~P Pasarkar and Adji~Bousso Dieng.
\newblock {Cousins Of The Vendi Score: A Family Of Similarity-Based Diversity Metrics For Science And Machine Learning}.
\newblock In {\em AISTATS}, pages 3808--3816. PMLR, 2024.

\bibitem[\protect\citeauthoryear{Pennec \bgroup \em et al.\egroup }{2006}]{pennec2006riemannian}
Xavier Pennec, Pierre Fillard, and Nicholas Ayache.
\newblock A riemannian framework for tensor computing.
\newblock {\em IJCV}, 66:41--66, 2006.

\bibitem[\protect\citeauthoryear{Radford \bgroup \em et al.\egroup }{2021}]{clip}
Alec Radford, Jong~Wook Kim, Chris Hallacy, Aditya Ramesh, Gabriel Goh, Sandhini Agarwal, Girish Sastry, Amanda Askell, Pamela Mishkin, Jack Clark, Gretchen Krueger, and Ilya Sutskever.
\newblock {Learning {{Transferable Visual Models From Natural Language Supervision}}}.
\newblock In {\em ICML}, pages {8748--8763}, {2021}.

\bibitem[\protect\citeauthoryear{Singla \bgroup \em et al.\egroup }{2024}]{pp}
Vasu Singla, Kaiyu Yue, Sukriti Paul, Reza Shirkavand, Mayuka Jayawardhana, Alireza Ganjdanesh, Heng Huang, Abhinav Bhatele, Gowthami Somepalli, and Tom Goldstein.
\newblock {From Pixels to Prose: A Large Dataset of Dense Image Captions}.
\newblock In {\em arXiv:2406.10328}, 2024.

\bibitem[\protect\citeauthoryear{Sogi \bgroup \em et al.\egroup }{2024a}]{sogi2025object}
Naoya Sogi, Takashi Shibata, and Makoto Terao.
\newblock Object-aware query perturbation for cross-modal image-text retrieval.
\newblock In {\em ECCV}, pages 447--464. Springer, 2024.

\bibitem[\protect\citeauthoryear{Sogi \bgroup \em et al.\egroup }{2024b}]{sogi2024disaster}
Naoya Sogi, Takashi Shibata, Makoto Terao, Kenta Senzaki, Masahiro Tani, and Royston Rodrigues.
\newblock Disaster damage visualization by vlm-based interactive image retrieval and cross-view image geo-localization.
\newblock In {\em IGARSS}, pages 1746--1749. IEEE, 2024.

\bibitem[\protect\citeauthoryear{Tekli}{2022}]{OverviewClusterbasedImage}
Joe Tekli.
\newblock An overview of cluster-based image search result organization: Background, techniques, and ongoing challenges.
\newblock {\em Knowledge and Information Systems}, 64(3):589--642, March 2022.

\bibitem[\protect\citeauthoryear{van Leuken \bgroup \em et al.\egroup }{2009}]{VisualDiversificationImage}
Reinier~H. van Leuken, Lluis Garcia, Ximena Olivares, and Roelof {van Zwol}.
\newblock Visual diversification of image search results.
\newblock In {\em WWW}, pages 341--350, New York, NY, USA, April 2009. Association for Computing Machinery.

\bibitem[\protect\citeauthoryear{Vieira \bgroup \em et al.\egroup }{2011}]{QueryResultDiversification}
Marcos~R. Vieira, Humberto~L. Razente, Maria C.~N. Barioni, Marios Hadjieleftheriou, Divesh Srivastava, Caetano Traina, and Vassilis~J. Tsotras.
\newblock On query result diversification.
\newblock In {\em ICDE}, pages 1163--1174, April 2011.

\bibitem[\protect\citeauthoryear{Wang \bgroup \em et al.\egroup }{2023}]{beit}
Wen Wang, Hangbo Bao, Li~Dong, Johan Bjorck, Zhiliang Peng, Qiangbo Liu, Kriti Aggarwal, Owais~Khan Mohammed, Saksham Singhal, Subhojit Som, and Furu Wei.
\newblock {Image as a Foreign Language: BEIT Pretraining for Vision and Vision-Language Tasks}.
\newblock {\em CVPR}, pages 19175--19186, 2023.

\bibitem[\protect\citeauthoryear{Weber \bgroup \em et al.\egroup }{2020}]{weber2020eccv}
Ethan Weber, Nuria Marzo, Dim~P. Papadopoulos, Aritro Biswas, Agata Lapedriza, Ferda Ofli, Muhammad Imran, and Antonio Torralba.
\newblock Detecting natural disasters, damage, and incidents in the wild.
\newblock In {\em ECCV}, August 2020.

\bibitem[\protect\citeauthoryear{Weber \bgroup \em et al.\egroup }{2022}]{I1M}
Ethan Weber, Dim~P Papadopoulos, Agata Lapedriza, Ferda Ofli, Muhammad Imran, and Antonio Torralba.
\newblock {Incidents1M: a large-scale dataset of images with natural disasters, damage, and incidents}.
\newblock {\em IEEE TPAMI}, 45(4):4768--4781, 2022.

\bibitem[\protect\citeauthoryear{Wei \bgroup \em et al.\egroup }{2024}]{wei2025uniir}
Cong Wei, Yang Chen, Haonan Chen, Hexiang Hu, Ge~Zhang, Jie Fu, Alan Ritter, and Wenhu Chen.
\newblock Uniir: Training and benchmarking universal multimodal information retrievers.
\newblock In {\em ECCV}, pages 387--404. Springer, 2024.

\bibitem[\protect\citeauthoryear{Wu \bgroup \em et al.\egroup }{2024}]{ResultDiversificationSearch}
Haolun Wu, Yansen Zhang, Chen Ma, Fuyuan Lyu, Bowei He, Bhaskar Mitra, and Xue Liu.
\newblock Result {{Diversification}} in {{Search}} and {{Recommendation}}: {{A Survey}}.
\newblock {\em IEEE TKDE}, pages 1--20, 2024.

\bibitem[\protect\citeauthoryear{Yu \bgroup \em et al.\egroup }{2022}]{yu2022coca}
Jiahui Yu, Zirui Wang, Vijay Vasudevan, Legg Yeung, Mojtaba Seyedhosseini, and Yonghui Wu.
\newblock Coca: Contrastive captioners are image-text foundation models.
\newblock {\em TMLR}, 2022.

\bibitem[\protect\citeauthoryear{Zeng \bgroup \em et al.\egroup }{2024}]{X22VLMAllinOnePreTrained}
Yan Zeng, Xinsong Zhang, Hang Li, Jiawei Wang, Jipeng Zhang, and Wangchunshu Zhou.
\newblock X{\textsuperscript{2}}2-{{VLM}}: {{All-in-One Pre-Trained Model}} for {{Vision-Language Tasks}}.
\newblock {\em IEEE TPAMI}, 46(5):3156--3168, May 2024.

\bibitem[\protect\citeauthoryear{Zhang \bgroup \em et al.\egroup }{2005}]{ImprovingWebSearch}
Benyu Zhang, Hua Li, Yi~Liu, Lei Ji, Wensi Xi, Weiguo Fan, Zheng Chen, and Wei-Ying Ma.
\newblock Improving web search results using affinity graph.
\newblock In {\em SIGIR}, pages 504--511. Association for Computing Machinery, August 2005.

\bibitem[\protect\citeauthoryear{Zhao \bgroup \em et al.\egroup }{2023}]{KeywordBasedDiverseImage}
Minyi Zhao, Jinpeng Wang, Dongliang Liao, Yiru Wang, Huanzhong Duan, and Shuigeng Zhou.
\newblock Keyword-{{Based Diverse Image Retrieval}} by {{Semantics-aware Contrastive Learning}} and {{Transformer}}.
\newblock In {\em SIGIR}, pages 1262--1272, New York, NY, USA, July 2023. Association for Computing Machinery.

\end{thebibliography}

\end{document}